\title{Learning Set Functions with Implicit Differentiation}
\author{\added{G\"{o}zde \"{O}zcan,  
        Chengzhi Shi,  
        Stratis Ioannidis\\}
}
\theoremstyle{plain}
\newtheorem{theorem}{Theorem}[section]
\newtheorem{lemma}[theorem]{Lemma}
\newtheorem{corollary}[theorem]{Corollary}
\theoremstyle{definition}
\newtheorem{definition}[theorem]{Definition}
\newtheorem{assumption}[theorem]{Assumption}
\theoremstyle{remark}
\DeclareMathOperator*{\argmax}{arg\,max}
\DeclareMathOperator*{\argmin}{arg\,min}
\DeclareMathOperator{\diag}{diag}
\newcommand{\fullversion}[2]{#2}
\begin{document}

\maketitle

\begin{abstract}
  \citet{ou2022learning} introduce the problem of learning set functions from data generated by a so-called optimal subset oracle. 
  Their approach 
  approximates the underlying utility function with an energy-based model, whose parameters are estimated via mean-field variational inference. 
  \citet{ou2022learning} show this reduces to fixed point iterations; however, as the number of iterations increases, automatic differentiation quickly becomes computationally prohibitive due to the size of the Jacobians that are stacked during backpropagation. We address this challenge with implicit differentiation and examine the convergence conditions for the fixed-point iterations. We empirically demonstrate the efficiency of our method on synthetic and real-world subset selection applications including product recommendation, set anomaly detection and compound selection tasks.

\end{abstract}
\section{Introduction}
  
Many interesting applications operate with set-valued outputs and/or inputs. Examples include product recommendation \citep{bonab2021cross, schafer1999recommender}, compound selection \citep{ning2011improved}, set matching \citep{saito2020exchangeable}, set retrieval \citep{feng2016pairwise}, point cloud processing \citep{Zhao_2019_CVPR, gionis2001efficient}, set prediction~\citep{zhang2019deep}, and set anomaly detection \citep{mavskova2024deep}, to name a few. Several recent works \citep{zaheer2017deep, lee2019set} apply neural networks to learn set functions from input/function value pairs, assuming access to a dataset generated by a \emph{function value} oracle. In other words, they assume having access to a dataset generated by an oracle that evaluates the value of the set function for any given input set.  

Recently, \citet{ou2022learning} proposed an approximate maximum likelihood estimation framework under the supervision of a so-called \emph{optimal subset} oracle. In contrast to traditional function value oracles, a label produced by an optimal subset oracle is the subset that maximizes an (implicit) utility set function, in the face of several alternatives. The goal of inference is to learn, in a parametric form, this utility function, under which observed oracle selections are optimal.  
As MLE is intractable in this setting, 
\citet{ou2022learning} propose performing variational inference instead. In turn, they show that approximating the distribution of oracle selections requires solving a fixed-point equation per sample. However, these fixed-point iterations may diverge in practice. In addition, 
\citet{ou2022learning} implement these iterations via \emph{loop unrolling}, i.e., by stacking up  neural network layers across iterations, and calculating the gradient using automatic differentiation; 
this makes backpropagation expensive, limiting their experiments to only a handful of iterations.

In this work, we establish a condition under which the fixed-point iterations proposed by 
\citet{ou2022learning} are guaranteed to converge. We also propose a more effective gradient computation utilizing the recent advances in implicit differentiation \citep{bai2019deep, kolter2020implicit, huang2021textrm}, instead of unrolling the fixed-point iterations via automatic differentiation \citep{paszke2017automatic}. This corresponds to differentiating after infinite fixed point iterations, while remaining tractable; we experimentally show that this 
improves the predictive performance of the inferred models.

We make the following contributions:
\begin{itemize}
    \item We prove that, as long as the multilinear relaxation~\cite{calinescu2011maximizing} of the objective function is bounded, and this bound is inversely proportional to the size of the ground set, the fixed-point iterations arising during the MLE framework introduced by \citet{ou2022learning} converge to a unique solution, regardless of the starting point. 
    
    \item We propose a more effective gradient computation by using implicit differentiation instead of unrolling the fixed-point iterations via automatic differentiation. To the best of our knowledge, we are the first ones to propose utilizing implicit differentiation in the context of learning set functions.
    
    \item We conduct experiments to show the advantage of our approach on multiple subset selection applications including set anomaly detection, product recommendation, and compound selection tasks. We also show, in practice, that the fixed-point iterations converge by normalizing the gradient of the multilinear relaxation. 
\end{itemize}

The remainder of the paper is organized as follows. We present related literature in Sec.~\ref{sec:related}. We summarize the learning set functions with optimal subset oracle setting introduced by \citet{ou2022learning} in Sec.~\ref{sec: setup}. We state our main contributions in Sec.~\ref{sec:contributions}. We present our experimental results in Sec.~\ref{sec:exp} and we conclude in Sec.~\ref{sec:conclusions}.

\section{Related Work}\label{sec:related}

\paragraph{Learning Set Functions from Oracles.}
There is a line of work where a learning algorithm is assumed to have access to the value of an unknown utility function for a given set~\cite{feldman2014learning, balcan2018submodular, zaheer2017deep, lee2019set, wendler2021learning, de2022neural}. This is the \emph{function value oracle} setting. \citet{zaheer2017deep}~and~\citet{de2022neural} regress over input set - function value pairs by minimizing the squared loss of the predictions while \citet{lee2019set} minimize the mean absolute error. However, obtaining a function value to a given subset is not an easy task for real-world applications. The value of a set may not be straightforward to quantify or can be expensive to compute. Alternatively,~\citet{tschiatschek2018differentiable}~and~\citet{ou2022learning} assume having access to an \emph{optimal subset oracle} for a given ground set in the training data. Similarly, we do not learn the objective function explicitly from input set - output value pairs. We learn it implicitly in the optimal subset oracle setting.

\paragraph{Learning Set Functions with Neural Networks.}
Multiple works aim to extend the capabilities of neural networks for functions on discrete domains, i.e., \emph{set functions}~\cite{zaheer2017deep, wendler2019powerset, soelch2019on, lee2019set, wagstaff2019limitations, Kim_2021_CVPR, zhang2022set, giannone2022scha}. Diverging from the traditional paradigm where the input data is assumed to be in a fixed dimensional vector format, set functions are characterized by their \emph{permutation invariance}, i.e., the output of a set does not depend on the order of its elements. We refer the reader to a survey about permutation-invariant networks by \citet{kimura2024permutation} for a more detailed overview. 
In this work, we also enforce permutation invariance by combining the energy-based model in Sec.~\ref{sec:ebm} with deep sets~\cite{zaheer2017deep}, following the proposed method of \citet{ou2022learning} (see also \fullversion{App.~A \added{of \citet{ozcan2024learning}}}{App.~\ref{app:perm_inv}}).




\citet{karalias2022neural} integrate neural networks with set functions by leveraging extensions of these functions to the continuous domain. Note that, their goal is not to learn a set function but to learn \emph{with} a set function, which differs from our objective.


\paragraph{Learning Submodular Functions.}
It is common to impose some structure on the objective when learning set functions. The underlying objective is often assumed to be submodular, i.e., it exhibits a diminishing returns property, while the parameters of such function are typically learned from function value oracles~\cite{dolhansky2016deep, bilmes2017deep, djolonga2017differentiable, kothawade2020deep, de2022neural, bhatt2024deep, gomez2012inferring, bach2013learning, feldman2014learning, he2016learning}. We do not make such assumptions, therefore, our results are applicable to a broader class of set functions. 




\paragraph{Implicit Differentiation.} In the context of machine learning, implicit differentiation 
is used in hyperparameter optimization~\cite{lorraine2020optimizing, bertrand2020implicit}, optimal control~\cite{xu2024revisiting}, reinforcement learning~\cite{nikishin2022control}, bi-level optimization~\cite{arbel2022amortized, zucchet2022beyond}, neural ordinary differential equations~\cite{chen2018neural, li2020scalable} and set prediction~\cite{zhang2022multiset}, to name a few. Inspired by the advantages observed over this wide-range of problems, we use implicit differentiation, i.e., a method for differentiating a function that is given implicitly \citep{krantz2002implicit}, to learn set functions for subset selection tasks by leveraging the JAX-based, modular automatic implicit differentiation tool provided by~\citet{blondel2022efficient}.

\paragraph{Implicit Layers.} Instead of specifying the output of a deep neural network layer as an explicit function over its inputs, \emph{implicit layers} are specified implicitly, via the conditions that layer outputs and inputs must jointly satisfy~\citep{kolter2020implicit}. Deep Equilibrium Models (DEQs)~\cite{bai2019deep} and their variants~\cite{winston2020monotone, huang2021textrm, sittoni2024subhomogeneous} directly compute the fixed-point resulting from stacking up hidden implicit layers by black-box root-finding methods, while also directly differentiating through the stacked fixed-point equations via implicit differentiation. We adapt this approach when satisfying the fixed-point constraints arising in our setting. The main difference is that in the aforementioned works, implicit layers correspond to a weight-tied feedforward network while in our case, they correspond to a deep set~\cite{zaheer2017deep} style architecture.

\section{Problem Setup}\label{sec: setup}

In the setting introduced by 
\citet{ou2022learning}, the aim is to learn set functions from a dataset generated by a so-called \emph{optimal subset} oracle. The dataset $\mathcal{D}$ consists of sample pairs of the form $(S^*,V)$, where (query) $V\subseteq \Omega$ is a set of \emph{options}, i.e., items from a universe $\Omega$ and (response) $S^*$ is the \emph{optimal subset} of $V$, as selected by an oracle. We further assume that each item is associated with a feature vector of dimension $d_f$, i.e., $\Omega\subseteq \mathbb{R}^{d_f}$.   
The goal is to learn a set function $F_{\boldsymbol{\theta}}: 2^\Omega\times 2^\Omega \rightarrow \mathbb{R}$, parameterized by $\boldsymbol{\theta} \in \mathbb{R}^d$, modeling the utility of the oracle, so that 
\begin{equation}\label{eq:optset}
    S^* = \mathop{\argmax}_{S \subseteq V} F_{\boldsymbol{\theta}} (S, V),
\end{equation}
for all pairs $(S^*,V)\in \mathcal{D}$.
As a motivating example, consider the case of product recommendations. Given a ground set $V$ of possible products to recommend, a recommender selects an optimal subset $S^* \subseteq V$ 
and suggests these to a user. In this setting, the 
function  $F_{\boldsymbol{\theta}} (S, V)$ captures, e.g., the recommender objective, the utility of the user, etc. Having access to a dataset of such pairs, the goal is to learn $F_{\boldsymbol{\theta}}$, effectively reverse-engineering the objective of the recommender engine, inferring the user's preferences, etc.


\subsection{MLE with Energy-Based Modeling}\label{sec:ebm}
\citet{ou2022learning} propose an approximate maximum likelihood estimation (MLE) by modeling oracle behavior via a Boltzmann energy (i.e., soft-max) model \citep{murphy2012machine, mnih2005learning, hinton2006unsupervised, lecun2006tutorial}. 
They assume that the oracle selection is probabilistic, and the probability that $S$ is selected given options $V$ is given by:
\begin{equation}\label{eq:EBM}
    p_{\boldsymbol{\theta}} (S\,|\,V) = 
    \frac{\exp{(F_{\boldsymbol{\theta}}(S,\, V))}}{\sum_{S' \subseteq V} \exp{(F_{\boldsymbol{\theta}}(S',\, V))}}.
\end{equation}
This is equivalent to Eq.~\eqref{eq:optset}, presuming that the utility $F_{\boldsymbol{\theta(\cdot)}}$ is distorted by Gumbel noise~\citep{kirsch2023stochastic}.
Then, given a dataset $\mathcal{D} = \{( S_i^*, V_i)\}_{i=1}^N$, MLE amounts to:
\begin{equation}
    \begin{split}
        &\argmax_{\boldsymbol{\theta}} 
        \sum_{i=1}^N \left[\log p_{\boldsymbol{\theta}} (S_i^*\,|\,V_i)\right].
    \end{split}
\end{equation}
Notice that multiplying $F_{\boldsymbol{\theta}}$ with a constant $c > 0$ makes no difference in the behavior of the optimal subset oracle in Eq.~\eqref{eq:optset}: the oracle would return the same decision under arbitrary re-scaling. However,  using  $c\cdot F_{\boldsymbol{\theta}}(\cdot)$ in the energy-based model of Eq.~\eqref{eq:EBM} corresponds to 
 setting a temperature parameter $c$ in the Boltzmann distribution~\citep{murphy2012machine, kirsch2023stochastic},  interpolating between the deterministic selection ($c\to\infty$) in Eq.~\eqref{eq:optset} and the uniform distribution ($c\to 0$).\footnote{From a Bayesian point of view, multiplying $F_{\boldsymbol{\theta}}(\cdot)$ with $c>0$ yields the posterior predictive distribution under an uninformative Dirichlet conjugate prior per set with parameter $\alpha=e^c$ \citep{murphy2012machine}.}

\subsection{Variational Approximation of Energy-Based Models}
Learning $\boldsymbol{\theta}$ by MLE 
is 
challenging precisely due to the exponential number of terms in the denominator of Eq.~\eqref{eq:EBM}. Instead, 
\citet{ou2022learning} construct an alternative optimization objective via mean-field variational inference as follows. First, they introduce a mean field variational approximation of the density $p_{\boldsymbol{\theta}}$ given by $q(S, V, \boldsymbol{\psi}) = \prod_{j \in S} \psi_j \prod_{j \in V \setminus S} (1 - \psi_j)$,  parameterized by the probability vector $\boldsymbol{\psi}$: this represents the probability that each element $j \in V$ is in the optimal subset $S^*$. Then,  estimation via variational inference amounts to the following optimization problem:
\begin{align}\label{eq:loss}
        \mathop{\text{Min.}}
        \quad &\mathcal{L}(\{\boldsymbol{\psi}^*_i\}
        ) = \mathbb{E}_{\mathbb{P}(V,\, S)} [-\log q(S, \,V\, ,\boldsymbol{\psi}^*)] \approx \\
        & \frac{1}{N} \sum_{i=1}^N \left(-\!\sum_{j \in S_i^*} \log \psi_{ij}^* -\!\!\!\! \sum _{j \in V_i \setminus S_i^*} \!\!\!\log \left(1 - \psi_{ij}^*\right)\right),\nonumber\\
        \text{subj.~to}\quad~&\boldsymbol{\psi}^*_i = \argmin_{\boldsymbol{\psi}} \mathbb{KL} (q(S_i,\, V_i,\,\boldsymbol{\psi}) \,||\, p_{\boldsymbol{\theta}} (S_i\mid V_i)) ,\nonumber\\&\text{for all}~i\in\{1,\ldots,n\},\nonumber
\end{align}
where 
$\boldsymbol{\psi}_i^* \in [0, 1]^{|V|}$ is the probability vector of elements in $V_i$ being included in $S_i$,  
$\mathbb{KL}(\cdot||\cdot)$ is the Kullback-Leibler divergence, and $p_{\boldsymbol{\theta}}(\cdot)$ is the energy-based model defined in Eq.~\eqref{eq:EBM}. 
In turn, this is found through the ELBO maximization process we discuss in the next section.





\subsection{ELBO Maximization}\label{sec:elbo_max} 

\begin{algorithm}[!t] 
\caption{$\mathtt{DiffMF}$~\citep{ou2022learning}}\label{alg:diffMF}
    \begin{algorithmic}[1]
        \REQUIRE training dataset $\{(S_i^\ast, \, V_i)\}_{i=1}^N$, learning rate $\eta$, number of samples $m$  
        \ENSURE parameter $\boldsymbol{\theta}$
        \STATE $\boldsymbol{\theta} \gets$  initialize
        \REPEAT 
            \STATE sample training data point
                
                $(S^\ast, V) \sim \{(S_i^\ast,\, V_i)\}_{i=1}^N$
            \STATE initialize the variational parameter
            
                $\boldsymbol{\psi}^{(0)} \leftarrow 0.5 * \mathbf{1}$
    
            \REPEAT
                \FOR {$k \gets 1, \ldots, K$}
                    \FOR {$j \gets 1, \ldots, |V|$ in parallel}
                        \STATE sample $m$ subsets
                        
                        $S_{\ell} \sim q(S, (\boldsymbol{\psi}^{(k-1)} | \psi_j^{(k-1)} \gets 0))$
                        
                        \STATE update variational parameter 
                        
                        $\mathbf{\psi}_j^{(k)}\!\gets\! \sigma \left(\frac{1}{m}\sum_{\ell=1}^m \!\left[F_{\boldsymbol{\theta}} (S_{\ell} \cup\{ j\}) \!-\! F_{\boldsymbol{\theta}}(S_{\ell})\right]\right)$
                        
                    \ENDFOR
                \ENDFOR
            \UNTIL convergence of $\boldsymbol{\psi}$
            
    
            
            
            
            \STATE update parameter $\boldsymbol{\theta}$ by unfolding the derivatives of the $K-$layer meta-network resulting from the fixed-point equations given in Eq.~\eqref{eq:iterative} during SGD\label{line:update}

                $\partial_{\boldsymbol{\theta}} \boldsymbol{\psi}^{(K)} (\boldsymbol{\theta}) \gets$ 
                
                $\quad \partial_{\boldsymbol{\theta}} \underbrace{\boldsymbol{\sigma} (\nabla_{\boldsymbol{\psi}^{(K-1)}} \Tilde{F} (\ldots (\boldsymbol{\sigma} (\nabla_{\boldsymbol{\psi}^{(0)}} \Tilde{F} (\boldsymbol{\psi}^{(0)})) \ldots))}_{K \text{nested functions}} $

                $\nabla_{\boldsymbol{\theta}} \mathcal{L} (\boldsymbol{\psi}^{{(K)}}, \boldsymbol{\theta}) \gets \nabla_{\boldsymbol{\psi}^{(K)}} \mathcal{L}(\boldsymbol{\psi}^{(K)}(\boldsymbol{\theta})) \cdot \partial_{\boldsymbol{\theta}} \boldsymbol{\psi}^{(K)} (\boldsymbol{\theta})$
            
                $\boldsymbol{\theta} \gets \boldsymbol{\theta} - \eta \nabla_{\boldsymbol{\theta}} \mathcal{L} \left(\boldsymbol{\psi}^{(K)}, \boldsymbol{\theta}\right)$
        \UNTIL convergence of $\boldsymbol{\theta}$
    \end{algorithmic}
\end{algorithm}

To compute $\boldsymbol{\psi}^*$, 
~\citet{ou2022learning} show that minimizing the constraint 
in Eq.~\eqref{eq:loss} via maximizing the corresponding evidence lower bound (ELBO) reduces to solving a fixed point equation. In particular, omitting the dependence on $i$ for brevity, the constraint in Eq.~\eqref{eq:loss} is equivalent to the following  ELBO  maximization \citep{kingma2013auto, blei2017variational}: 
\begin{equation}\label{eq:ELBO}
    \max_{\boldsymbol{\psi}} \Tilde{F} (\boldsymbol{\psi}, \boldsymbol{\theta}) + \mathbb{H} (q(S, V, \boldsymbol{\psi}))
    ,
\end{equation}
where $\mathbb{H}(\cdot)$ 
is the entropy and  $\Tilde{F}: [0, 1]^{|V|} \times \mathbb{R}^d \rightarrow \mathbb{R}$ is the so-called multilinear extension of $F_{\boldsymbol{\theta}}(S,V)$ \cite{calinescu2011maximizing}, given by:
\begin{equation}\label{eq:mult_ext}
    \begin{split}
        \Tilde{F} (\boldsymbol{\psi}, \boldsymbol{\theta}) 
        &= \sum_{S \subseteq V} F_{\boldsymbol{\theta}}(S,V) \prod_{j \in S} \psi_j \prod_{j \in V\setminus S} (1-\psi_j).
    \end{split} 
\end{equation}
\citet{ou2022learning} show that a stationary point maximizing the ELBO in Eq.~\eqref{eq:ELBO} must satisfy:
\begin{equation} \label{eq:fixed_point}
    \begin{split}
        \boldsymbol{\psi} - \boldsymbol{\sigma}(\nabla_{\boldsymbol{\psi}} \Tilde{F} (\boldsymbol{\psi}, \boldsymbol{\theta})) &= 0,
    \end{split}
\end{equation}
where the function $\boldsymbol{\sigma}: \mathbb{R}^{|V|} \rightarrow \mathbb{R}^{|V|}$ is defined as $\boldsymbol{\sigma} (\mathbf{x}) = \begin{bmatrix}\sigma (x_j)\end{bmatrix}_{j=1}^{|V|}$ and $\sigma: \mathbb{R} \rightarrow \mathbb{R}$ is the sigmoid function, i.e., $\sigma(x) = (1 + \exp{(-x)})^{-1}$. The detailed derivation of this condition can be found in App.~\fullversion{C.1
\added{ of \citet{ozcan2024learning}}}{\ref{app:fixed_point_derivation}}. 

Observing that the stationary condition in Eq.~\eqref{eq:fixed_point} is a fixed point equation, 
\citet{ou2022learning} propose solving it via 
the following fixed-point iterations. Given $\boldsymbol{\theta}\in\mathbb{R}^d$,
\begin{align}\label{eq:iterative}
    \begin{split}
       \boldsymbol{\psi}^{(0)} &\leftarrow \text{Initialize in } [0, 1]^{|V|},\\
        \boldsymbol{\psi}^{(k)} &\leftarrow \boldsymbol{\sigma} (\nabla_{\boldsymbol{\psi}} \Tilde{F} (\boldsymbol{\psi}^{(k-1)}, \boldsymbol{\theta})),\\
       \boldsymbol{\psi}^* &\leftarrow \boldsymbol{\psi}^{(K)},
    \end{split}
\end{align}
where $k\in \mathbb{N},$ and $K$ is the number of iterations. 
The exact computation of the multilinear relaxation defined in Eq.~\eqref{eq:mult_ext} requires an exponential number of terms in the size of $V$. However, it is possible to efficiently estimate both the multilinear relaxation and its gradient $\nabla_{\boldsymbol{\psi}} \Tilde{F} (\boldsymbol{\psi}, \boldsymbol{\theta})$ via Monte Carlo 
sampling (see App.~\fullversion{C.2
\added{ of \citet{ozcan2024learning}}}{\ref{app:sampling}} for details). 

\subsection{$\mathtt{DiffMF}$ and Variants}

Putting everything together yields the  $\mathtt{DiffMF}$ algorithm introduced by 
\citet{ou2022learning}. For completeness, we summarize this procedure in Alg.~\ref{alg:diffMF}. In short, they implement the fixed-point iterative update steps in Eq.~\eqref{eq:iterative} by executing a fixed number of iterations $K$, given $\boldsymbol{\theta}$, and unrolling the loop: in their implementation, this amounts to stacking up $K$ layers, each involving an estimate of the gradient of the multilinear relaxation via sampling, and thereby multiple copies of a neural network representing $F_{\boldsymbol{\theta}}(\cdot)$ (one per sample)
. Subsequently, this extended network is entered in the loss given in Eq.~\eqref{eq:loss}, which is minimized w.r.t.~$\boldsymbol{\theta}$ via SGD. 

They also introduce two variants of this algorithm, regressing also $\boldsymbol{\psi}^{(0)}$ as a function of the item features via an extra recognition network, assuming the latter are independent (terming inference in this setting as $\mathtt{EquiVSet}_{\textmd{ind}}$) or correlated by a Gaussian copula \citep{sklar1973random, Nels06} (termed $\mathtt{EquiVSet}_{\textmd{copula}}$). Compared to $\mathtt{DiffMF}$, both translate to additional initial layers and steps per epoch.

\subsection{Challenges}\label{sec:challenges}
The above approach by 
\citet{ou2022learning}, and its variants, have two drawbacks. First, the fixed-point iterative updates given in Eq.~\eqref{eq:iterative} are not guaranteed to converge to an optimal solution. We indeed frequently observed divergence experimentally, in practice. 
Without convergence and uniqueness guarantees, the quality of the output, $\boldsymbol{\psi}^{(K)}$, is heavily dependent on the selection of the starting point, $\boldsymbol{\psi}^{(0)}$.   
Moreover, as these iterations correspond to stacking up layers, each containing multiple copies of $F_{\boldsymbol{\theta}}(\cdot)$ due to sampling, 
backpropagation is computationally prohibitive both in terms of time as well as space complexity. In fact, poor performance due to lack of convergence, as well as computational considerations, led Ou et al. to set the number of iterations to $K\leq 5$  (even $K=1$) in their experiments.
We address both of these challenges in the next section.


\section{Our Approach}\label{sec:contributions}
Recall from the previous section that minimizing the constraint of the optimization problem given in Eq.~\eqref{eq:loss} is the equivalent of the ELBO in Eq.~\eqref{eq:ELBO}, and the stationary condition of optimizing this ELBO reduces to Eq.~\eqref{eq:fixed_point}. Stitching everything together, we wish to solve the following optimization problem:
\begin{align}\label{eq:fp_loss}
        \mathop{\text{Min.}}_{\{\boldsymbol{\psi}_i^\ast\}, \boldsymbol{\theta}} \quad
        & \mathcal{L}(\{\boldsymbol{\psi_i^\ast}\}
        ) \approx \\
        & \frac{1}{N} \sum_{i=1}^N \left(-\sum_{j \in S_i^*} \log \psi_{ij} -\!\!\!\! \sum _{j \in V_i \setminus S_i^*} \log \left(1 - \psi_{ij}\right)\right),\nonumber\\
        \text{subj.~to}\quad
        & \boldsymbol{\psi}_i^{\ast} = \boldsymbol{\sigma}(\nabla_{\boldsymbol{\psi}} \Tilde{F} (\boldsymbol{\psi}_i^\ast, \boldsymbol{\theta})),~\text{for all}~i\in\{i,\ldots,n\}.\nonumber
\end{align}

To achieve this goal, we 
 (a) establish conditions under which iterations of  Eq.~\eqref{eq:iterative} converge to a unique solution, by utilizing the Banach fixed-point theorem and (b) establish a way to efficiently compute the gradient of the loss at the fixed-point by using the implicit function theorem. Our results pave the way to utilize recent tools developed in the context of implicit differentiation \citep{bai2019deep, kolter2020implicit, blondel2022efficient} to the setting of 
 \citet{ou2022learning}.

\subsection{Convergence Condition for the Fixed-Point}\label{sec:fp_convergence} 
Fixed-points can be attracting, repelling, or neutral \citep{davies2018exploring, rechnitzer2003fpsum}. We characterize the condition under which the convergence is guaranteed in the following assumption.

\begin{assumption}\label{asm:bound}
    Consider 
    the multilinear relaxation $\Tilde{F}: [0, 1]^{|V|} \times \mathbb{R}^d \rightarrow \mathbb{R}$ of $F_{\boldsymbol{\theta}}(\cdot)$, as defined in Eq.~\eqref{eq:mult_ext}. For all $\boldsymbol{\theta} \in \mathbb{R}^d$,
    \begin{equation}\label{eq:cond}
        \sup_{\boldsymbol{\psi} \in [\boldsymbol{0}, \boldsymbol{1}]} |\Tilde{F}(\boldsymbol{\psi}, \boldsymbol{\theta})| < \frac{1}{|V|}.
    \end{equation}
\end{assumption}

As discussed in Sec.~\ref{sec: setup}, scaling $F_{\boldsymbol{\theta}} (S, V)$  by a positive scalar amounts to setting the temperature of a Boltzmann distribution. Moreover, neural networks are often Lipschitz-regularized for bounded inputs and weights \citep{szegedy2014intriguing, virmaux2018lipschitz, gouk2021regularisation}. Therefore, for any such Lipschitz neural network, we can satisfy Asm.~\ref{asm:bound} by appropriately setting the temperature parameter of the EBM in Eq.~\eqref{eq:EBM}. Most importantly, satisfying this condition guarantees convergence: 

\begin{theorem}\label{thm:banach}
    Assume a set function $F_{\boldsymbol{\theta}}: 2^V \rightarrow \mathbb{R}$ satisfies Asm.~\ref{asm:bound}. Then, the fixed-point given in Eq.~\eqref{eq:fixed_point} has a unique solution $\boldsymbol{\psi}^* \in [0, 1]^{|V|}$ where $\boldsymbol{\psi}^* = \boldsymbol{\sigma}(\nabla_{\boldsymbol{\psi}} \Tilde{F} (\boldsymbol{\psi}^*, \boldsymbol{\theta})).$ Moreover, starting with an arbitrary point $\boldsymbol{\psi}^{(0)} \in [0, 1]^{|V|}$, $\boldsymbol{\psi}^*$ can be found via the fixed-point iterative sequence described in Eq.~\eqref{eq:iterative} where $\lim_{k \to \infty} \boldsymbol{\psi}^{(k)} = \boldsymbol{\psi}^*$.
\end{theorem}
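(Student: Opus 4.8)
The plan is to recognize the right-hand side of Eq.~\eqref{eq:fixed_point} as a self-map of a complete metric space and verify the hypotheses of the Banach fixed-point theorem. Concretely, I would define $T:[0,1]^{|V|}\to[0,1]^{|V|}$ by $T(\boldsymbol{\psi})=\boldsymbol{\sigma}(\nabla_{\boldsymbol{\psi}}\Tilde{F}(\boldsymbol{\psi},\boldsymbol{\theta}))$. Since the sigmoid takes values in $(0,1)$, $T$ indeed maps into $[0,1]^{|V|}$, and the latter is a closed, bounded, convex subset of $\mathbb{R}^{|V|}$, hence complete under the $\ell_\infty$ metric. The fixed points of $T$ are exactly the solutions of Eq.~\eqref{eq:fixed_point}, and the updates in Eq.~\eqref{eq:iterative} are precisely the Picard iterates $\boldsymbol{\psi}^{(k)}=T(\boldsymbol{\psi}^{(k-1)})$. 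Thus, once $T$ is shown to be a contraction, both the existence and uniqueness of $\boldsymbol{\psi}^*$ and the convergence $\boldsymbol{\psi}^{(k)}\to\boldsymbol{\psi}^*$ from any starting point follow immediately.

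The core of the argument is a uniform bound on the Jacobian of $T$ in the induced $\ell_\infty$ operator norm, which I would obtain via the chain rule together with properties of the multilinear extension. Differentiating componentwise gives $\partial T_i/\partial\psi_j=\sigma'\!\big([\nabla_{\boldsymbol{\psi}}\Tilde{F}]_i\big)\cdot\partial^2\Tilde{F}/\partial\psi_i\partial\psi_j$, and since $\sigma'(x)=\sigma(x)(1-\sigma(x))\le\tfrac14$, it remains to control the Hessian entries of $\Tilde{F}$. Here the multilinearity of $\Tilde{F}$ in Eq.~\eqref{eq:mult_ext} is decisive: because $\Tilde{F}$ is affine in each coordinate separately, the diagonal entries $\partial^2\Tilde{F}/\partial\psi_i^2$ vanish, while each off-diagonal entry equals the mixed second difference $\Tilde{F}|_{\psi_i=1,\psi_j=1}-\Tilde{F}|_{\psi_i=1,\psi_j=0}-\Tilde{F}|_{\psi_i=0,\psi_j=1}+\Tilde{F}|_{\psi_i=0,\psi_j=0}$, where each term is an evaluation of $\Tilde{F}$ at a point of $[0,1]^{|V|}$. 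By Asm.~\ref{asm:bound}, every such term is bounded in absolute value by $\sup_{\boldsymbol{\psi}}|\Tilde{F}|<1/|V|$, so $|\partial^2\Tilde{F}/\partial\psi_i\partial\psi_j|<4/|V|$.

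Combining these, every off-diagonal Jacobian entry satisfies $|\partial T_i/\partial\psi_j|<\tfrac14\cdot\tfrac{4}{|V|}=1/|V|$ while the diagonal entries are zero, so the maximum absolute row sum obeys $\|J_T(\boldsymbol{\psi})\|_\infty\le(|V|-1)\cdot\tfrac{1}{|V|}=1-\tfrac{1}{|V|}<1$, uniformly over the convex domain. Applying the mean value inequality along the segment joining any $\boldsymbol{\psi}_1,\boldsymbol{\psi}_2\in[0,1]^{|V|}$ then yields $\|T(\boldsymbol{\psi}_1)-T(\boldsymbol{\psi}_2)\|_\infty\le(1-\tfrac{1}{|V|})\,\|\boldsymbol{\psi}_1-\boldsymbol{\psi}_2\|_\infty$, i.e.\ $T$ is a contraction with modulus $1-1/|V|$. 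The Banach fixed-point theorem then delivers the unique $\boldsymbol{\psi}^*$ and the claimed convergence.

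The step I expect to be the crux — and the one where the precise form of Asm.~\ref{asm:bound} is exploited — is the Hessian bound: it is the vanishing of the diagonal (from multilinearity) together with the calibration of the sup bound to $1/|V|$ that makes the dimension-dependent row sum collapse to $1-1/|V|$ and remain strictly below $1$. The remaining items (completeness, the self-map property, and identifying Eq.~\eqref{eq:iterative} as Picard iteration) are routine. A minor point to check is that $T$ is $C^1$ on the domain so that the mean value inequality applies; this holds because $\Tilde{F}$ is a polynomial and $\sigma$ is smooth.
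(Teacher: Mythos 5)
Your proof is correct and follows essentially the same route as the paper's: define $T(\boldsymbol{\psi})=\boldsymbol{\sigma}(\nabla_{\boldsymbol{\psi}}\Tilde{F}(\boldsymbol{\psi},\boldsymbol{\theta}))$, bound its Jacobian entries by $|\sigma'|\le\tfrac14$ times the Hessian entries of $\Tilde{F}$, control the latter via the four-term second difference of $\Tilde{F}$ together with Asm.~\ref{asm:bound}, and conclude by the mean value inequality and the Banach fixed-point theorem. The only difference is cosmetic: you work with the $\ell_\infty$ operator norm (exploiting the vanishing Hessian diagonal to get modulus $1-1/|V|$) while the paper bounds the Frobenius norm by $|V|\sup_{\boldsymbol{\psi}}|\Tilde{F}(\boldsymbol{\psi},\boldsymbol{\theta})|<1$; both yield a uniform contraction.
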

The proof can be found in App.~\fullversion{E
\added{ of \citet{ozcan2024learning}}}{\ref{app:convergence}} and relies on the Banach fixed-point theorem~\citep{banach1922operations}. Thm.~\ref{thm:banach} implies that as long as $\Tilde{F}(\boldsymbol{\psi}, \boldsymbol{\theta})$ is bounded and this bound is inversely correlated with the size of the ground set, we can find a unique solution to Eq.~\eqref{eq:fixed_point}, no matter where we start the iterations in Eq.~\eqref{eq:iterative}.


\subsection{Efficient Differentiation through Implicit Layers} 
Our second contribution is to disentangle gradient computation from stacking layers together, by using the implicit function theorem \citep{krantz2002implicit}. This allows us to use the recent work on deep equilibrium models (DEQs) \citep{bai2019deep, kolter2020implicit}.

Define $\boldsymbol{\psi}^*(\cdot)$ to be the map $\boldsymbol{\theta}\mapsto \boldsymbol{\psi}^*(\boldsymbol{\theta})$ induced by Eq.~\eqref{eq:fixed_point}; equivalently, given $\boldsymbol{\theta}$, $\boldsymbol{\psi}^*(\boldsymbol{\theta})$ is the (unique by Thm.~\ref{thm:banach}) limit point of iterations given in Eq.~\eqref{eq:iterative}. Observe that, by the chain rule:
\begin{align}\label{eq: chain rule}
    \nabla_{\boldsymbol{\theta}} \mathcal{L} (\boldsymbol{\psi}^*(\boldsymbol{\theta})
    ) = \nabla_{\boldsymbol{\psi}} \mathcal{L}(\boldsymbol{\psi}^*(\boldsymbol{\theta})
    ) \cdot \partial_{\boldsymbol{\theta}} \boldsymbol{\psi}^* (\boldsymbol{\theta}).
\end{align}
The term that is difficult to compute here via back-propagation, that required stacking in 
\citet{ou2022learning}, is the Jacobian $\partial_{\boldsymbol{\theta}} \boldsymbol{\psi}^* (\boldsymbol{\theta})$, as we do not have the map $\boldsymbol{\psi}^*(\cdot)$ in a closed form. 
Nevertheless, we can use the implicit function theorem (see Thm.~\fullversion{D.4 
in \added{\citet{ozcan2024learning}}}{\ref{thm:implicit} in App.~\ref{app:prelim}\deleted{ of the supplement}}) 
to compute this quantity. 

 Indeed, to simplify the notation for clarity, we define a function $G: [0, 1]^{|V|} \times \mathbb{R}^d \rightarrow [0, 1]^{|V|}$, where $$G(\boldsymbol{\psi}(\boldsymbol{\theta}), \boldsymbol{\theta}) \triangleq \boldsymbol{\sigma}(\nabla_{\boldsymbol{\psi}} \Tilde{F} (\boldsymbol{\psi}, \boldsymbol{\theta})) - \boldsymbol{\psi}$$ 
 and rewrite Eq.~\eqref{eq:fixed_point} as $G(\boldsymbol{\psi}(\boldsymbol{\theta}), \boldsymbol{\theta}) = 0$.
Using the implicit function theorem, given in App.~\fullversion{D
\added{ of \citet{ozcan2024learning}}}{\ref{app:prelim} \deleted{of the supplement}}, we obtain
    \begin{equation}\label{eq:fp_implicit}
        \underbrace{-\partial_{\boldsymbol{\psi}} G(\boldsymbol{\psi}^*(\boldsymbol{\theta}), \boldsymbol{\theta)}}_{A \in \mathbb{R}^{|V| \times |V|}} \underbrace{\partial_{\boldsymbol{\theta}} \boldsymbol{\psi}^*(\boldsymbol{\theta})}_{J \in \mathbb{R}^{|V| \times d}} = \underbrace{\partial_{\boldsymbol{\theta}} G(\boldsymbol{\psi}^*(\boldsymbol{\theta}), \boldsymbol{\theta)}}_{B \in \mathbb{R}^{|V| \times d}}.
    \end{equation}
    This yields the following way of computing the Jacobian via implicit differentiation:
\begin{theorem}\label{thm:implicit_result}
    Computing $\partial_{\boldsymbol{\theta}} \boldsymbol{\psi}^*(\boldsymbol{\theta})$ is the equivalent of solving a linear system of equations, i.e., $\partial_{\boldsymbol{\theta}} \boldsymbol{\psi}^*(\boldsymbol{\theta}) = A^{-1} B$, 
    \begin{equation}
        \begin{split}
            A &= 
            I - \Sigma' (\nabla_{\boldsymbol{\psi}} \Tilde{F}\left(\boldsymbol{\psi}, \boldsymbol{\theta}\right)) \cdot  \nabla_{\boldsymbol{\psi}}^2 \Tilde{F}\left(\boldsymbol{\psi}, \boldsymbol{\theta}\right), \,\text{and}
            \\
            B &= 
            \Sigma'(\nabla_{\boldsymbol{\psi}} \Tilde{F} \left(\boldsymbol{\psi}, \boldsymbol{\theta}\right)) \cdot \partial_{\boldsymbol{\theta}} \nabla_{\boldsymbol{\psi}} \Tilde{F} \left(\boldsymbol{\psi}, \boldsymbol{\theta}\right),
        \end{split}
    \end{equation}
    where 
    $\Sigma' (\boldsymbol{x}) = \diag \left(\begin{bmatrix}\sigma' (x_j)\end{bmatrix}_{j=1}^{|V|}\right)$, 
    and $\sigma'(x) = (1 + \exp{(-x)})^{-2} \cdot \exp{(-x)}$. 
\end{theorem}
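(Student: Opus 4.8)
The plan is to invoke the implicit function theorem on the equation $G(\boldsymbol{\psi}^*(\boldsymbol{\theta}),\boldsymbol{\theta})=0$, exactly as the excerpt has already set up, so that computing $\partial_{\boldsymbol{\theta}}\boldsymbol{\psi}^*(\boldsymbol{\theta})$ reduces to the linear system in Eq.~\eqref{eq:fp_implicit} with $A=-\partial_{\boldsymbol{\psi}}G$ and $B=\partial_{\boldsymbol{\theta}}G$. The proof is therefore essentially computational: I would differentiate $G(\boldsymbol{\psi},\boldsymbol{\theta})=\boldsymbol{\sigma}(\nabla_{\boldsymbol{\psi}}\Tilde{F})-\boldsymbol{\psi}$ through the chain rule to recover the stated forms of $A$ and $B$, and then separately confirm that $A$ is invertible, so that $A^{-1}B$ is well defined and the hypotheses of the implicit function theorem are genuinely met.

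First I would compute $A=-\partial_{\boldsymbol{\psi}}G$. Differentiating $G$ with respect to $\boldsymbol{\psi}$, the outer map $\boldsymbol{\sigma}$ acts coordinatewise, so its Jacobian is the diagonal matrix $\Sigma'(\nabla_{\boldsymbol{\psi}}\Tilde{F})$ with entries $\sigma'$; the inner map $\boldsymbol{\psi}\mapsto\nabla_{\boldsymbol{\psi}}\Tilde{F}$ has Jacobian equal to the Hessian $\nabla_{\boldsymbol{\psi}}^2\Tilde{F}$; and the explicit $-\boldsymbol{\psi}$ term contributes $-I$. Negating gives $A=I-\Sigma'(\nabla_{\boldsymbol{\psi}}\Tilde{F})\,\nabla_{\boldsymbol{\psi}}^2\Tilde{F}$, as claimed. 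Next I would compute $B=\partial_{\boldsymbol{\theta}}G$, treating $\boldsymbol{\psi}$ as held fixed, as the partial derivative requires. Here the $-\boldsymbol{\psi}$ term is $\boldsymbol{\theta}$-independent and drops out, leaving only the chain rule through $\boldsymbol{\sigma}$: the same outer factor $\Sigma'(\nabla_{\boldsymbol{\psi}}\Tilde{F})$ times the mixed partial $\partial_{\boldsymbol{\theta}}\nabla_{\boldsymbol{\psi}}\Tilde{F}$, i.e., $B=\Sigma'(\nabla_{\boldsymbol{\psi}}\Tilde{F})\,\partial_{\boldsymbol{\theta}}\nabla_{\boldsymbol{\psi}}\Tilde{F}$.

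The step that requires genuine care, and the main obstacle, is the invertibility of $A$, which is simultaneously the nonsingularity hypothesis the implicit function theorem needs. I would establish it by linking back to Thm.~\ref{thm:banach}: under Asm.~\ref{asm:bound}, the fixed-point map $T(\boldsymbol{\psi})=\boldsymbol{\sigma}(\nabla_{\boldsymbol{\psi}}\Tilde{F}(\boldsymbol{\psi},\boldsymbol{\theta}))$ is a contraction, and its Jacobian is precisely $M=\Sigma'(\nabla_{\boldsymbol{\psi}}\Tilde{F})\,\nabla_{\boldsymbol{\psi}}^2\Tilde{F}$, the very matrix appearing in $A=I-M$. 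Since a differentiable contraction has Jacobian of operator norm strictly below one, $\|M\|<1$, so $I-M$ is invertible, with inverse given by the convergent Neumann series $\sum_{k\ge 0}M^k$; equivalently, $1$ is not an eigenvalue of $M$. This both discharges the implicit function theorem's hypothesis and makes $A^{-1}B$ well defined. The only remaining routine check is that $G$ is continuously differentiable in both arguments, which is immediate from the smoothness of $\sigma$ and of the multilinear extension $\Tilde{F}$, so I would not expect difficulty there.
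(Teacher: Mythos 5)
Your proposal is correct and follows essentially the same route as the paper: invoke the implicit function theorem on $G(\boldsymbol{\psi},\boldsymbol{\theta})=\boldsymbol{\sigma}(\nabla_{\boldsymbol{\psi}}\Tilde{F})-\boldsymbol{\psi}$ and obtain $A$ and $B$ by the chain rule, with the diagonal factor $\Sigma'$ coming from the coordinatewise sigmoid and the Hessian (resp.\ mixed partial) from the inner map; the paper's appendix proof carries out exactly this computation, just written out entry-by-entry in matrix form. The one place you go beyond the paper is in explicitly discharging the nonsingularity hypothesis of the implicit function theorem — observing that $A=I-M$ with $M$ the Jacobian of the contraction from Thm.~\ref{thm:banach}, so $\|M\|<1$ under Asm.~\ref{asm:bound} and $A^{-1}$ exists via the Neumann series — a step the paper's proof leaves implicit, and a worthwhile addition.
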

The proof is in App.~\fullversion{F
\added{ of \citet{ozcan2024learning}}}{\ref{app:AandB}\deleted{ of the supplement}}. 
Eq.~\eqref{eq:fp_implicit} shows that the Jacobian of the fixed-point solution, $\partial_{\boldsymbol{\theta}} \boldsymbol{\psi}^*(\boldsymbol{\theta})$, can be expressed in terms of Jacobians of $G$ at the solution point. This means implicit differentiation only needs the final fixed point value, whereas automatic differentiation via the approach by 
\citet{ou2022learning} required all the iterates (see also \citep{kolter2020implicit}). In practice, we use JAXopt \citep{blondel2022efficient} for its out-of-the-box implicit differentiation support. \added{This allows us to handle Hessian inverse computations efficiently (see App.~\fullversion{G 
of \citet{ozcan2024learning}}{\ref{app:hess_inv}}}).

\subsection{Implicit Differentiable Mean Field Variation}
Putting everything together, we propose \emph{\textbf{i}mplicitly \textbf{Diff}erentiable \textbf{M}ean \textbf{F}ield variation} ($\mathtt{iDiffMF}$) algorithm. This algorithm finds the solution of the fixed-point in Eq.~\eqref{eq:fixed_point} by a root-finding method. Then, computes the gradient of the loss given in Eq.~\eqref{eq: chain rule} by using the result of the implicit function theorem given in Thm.~\ref{thm:implicit_result}, and updates parameter $\boldsymbol{\theta}$ in the direction of this gradient. We summarize this process in Alg.~\ref{alg:i-diffMF}.

\begin{algorithm}[!t] 
\caption{$\mathtt{iDiffMF}$}\label{alg:i-diffMF}
    \begin{algorithmic}[1]
        \REQUIRE training dataset $\{(S_i^\ast, \, V_i)\}_{i=1}^N$, learning rate $\eta$, number of samples $m$  
        \ENSURE parameter $\boldsymbol{\theta}$
        \STATE $\boldsymbol{\theta} \gets$  initialize
        \REPEAT {
            \STATE sample training data point
                
                $(S^\ast, V) \sim \{(S_i^*,\, V_i)\}_{i=1}^N$
            \STATE initialize the variational parameter
            
                $\boldsymbol{\psi}^{(0)} \leftarrow 0.5 * \mathbf{1}$
    
            \FOR {$j \gets 1, \ldots, |V|$ in parallel}
                \STATE sample $m$ subsets
                
                $S_{\ell} \sim q(S, (\boldsymbol{\psi} | \psi_j \gets 0))$
                \STATE update variational parameter 
                
                $\mathbf{\psi}_j^{*} \gets \sigma \left(\frac{1}{m}\sum_{\ell=1}^m \left[F_{\boldsymbol{\theta}} (S_{\ell} \cup \{j\}) - F_{\boldsymbol{\theta}}(S_{\ell})\right]\right)$
            \ENDFOR
            \STATE update parameter $\boldsymbol{\theta}$ by computing Eq.~\eqref{eq: chain rule} through Thm.~\ref{thm:implicit_result}\label{line:update2}

                $\partial_{\boldsymbol{\theta}} \boldsymbol{\psi}^* (\boldsymbol{\theta}) \gets A^{-1} B$ (see Thm.~\ref{thm:implicit_result})

                $\nabla_{\boldsymbol{\theta}} \mathcal{L} (\boldsymbol{\psi}^{*}, \boldsymbol{\theta}) \gets \nabla_{\boldsymbol{\psi}^*} \mathcal{L}(\boldsymbol{\psi}^*(\boldsymbol{\theta})) \cdot \partial_{\boldsymbol{\theta}} \boldsymbol{\psi}^* (\boldsymbol{\theta})$
    
                $\boldsymbol{\theta} \gets \boldsymbol{\theta} - \eta \nabla_{\boldsymbol{\theta}} \mathcal{L} (\boldsymbol{\psi}^{*}, \boldsymbol{\theta})$
            }
        \UNTIL convergence of $\boldsymbol{\theta}$
    \end{algorithmic}
\end{algorithm}

To emphasize the difference between Alg.~\ref{alg:diffMF} and Alg.~\ref{alg:i-diffMF}, let us focus on lines~\ref{line:update} and~\ref{line:update2}, respectively. On Line~\ref{line:update} of the pseudo-code for the $\mathtt{DiffMF}$ algorithm, gradient of the loss corresponds to 
\begin{align*}
    \nabla_{\boldsymbol{\theta}} \mathcal{L} \left(\boldsymbol{\psi}^{(K)}
    \right) = \nabla_{\boldsymbol{\psi}} \mathcal{L}\left(\boldsymbol{\psi}^{(K)}
    \right) \cdot \partial_{\boldsymbol{\theta}} \boldsymbol{\psi}^{(K)} 
    ,
\end{align*}
where $\boldsymbol{\psi}^{(K)}$ is a nested function in the form of
$$\boldsymbol{\psi}^{(K)} = \boldsymbol{\sigma} (\nabla_{\boldsymbol{\psi}} \Tilde{F} (\ldots (\boldsymbol{\sigma} (\nabla_{\boldsymbol{\psi}} \Tilde{F} (\boldsymbol{\psi}^{(0)}, \boldsymbol{\theta})), \ldots, \boldsymbol{\theta})).$$ Therefore, automatic differentiation has to unroll all $K$ layers during gradient computation. On the other hand, on Line~\ref{line:update2} of the $\mathtt{iDiffMF}$ algorithm, gradient of the loss is computed through Eq.~\eqref{eq: chain rule} where $\partial_{\boldsymbol{\theta}} \boldsymbol{\psi}^*(\boldsymbol{\theta})$ has a closed form formulation as a result of Thm.~\ref{thm:implicit_result}.


\subsection{Complexity}\label{sec:memory}
Reverse mode automatic differentiation has a memory complexity that scales linearly with the number of iterations performed for finding the root of the fixed-point, i.e., it has a memory complexity of $\mathcal{O}(K)$ where $K$ is the total number of iterations \citep{bai2019deep}. On the other hand, reverse mode implicit differentiation has a constant memory complexity, $\mathcal{O}(1)$, because the differentiation is performed \emph{analytically} as a result of using the implicit function theorem. Fig.~\ref{fig:memory_analysis} in Sec.~\ref{sec:exp} reflects the advantage of using implicit differentiation in terms of space requirements numerically.

In the forward mode, the time complexity of the iterative sequence inside $\mathtt{DiffMF}$ is again $\mathcal{O}(K)$ as the number of iterations is pre-selected and does not change with the rate of convergence. Inside $\mathtt{iDiffMF}$, the convergence rate depends on the Lipschitz constant of the fixed-point in Eq.~\eqref{eq:fixed_point} and the size of the ground set. In particular, the number of iterations required for finding the root of Eq.~\eqref{eq:fixed_point} is bounded by $\frac{\log{(\epsilon (1-\omega)/\sqrt{|V|})}}{\log{\omega}},$ where $\epsilon$ is the tolerance threshold and $\omega$ is the Lipschitz constant, i.e., $\|\boldsymbol{\sigma} (\nabla_{\boldsymbol{\psi}} \Tilde{F}\left(\boldsymbol{x}, \boldsymbol{\theta}\right)) - \boldsymbol{\sigma} (\nabla_{\boldsymbol{\psi}} \Tilde{F}\left(\boldsymbol{y}, \boldsymbol{\theta}\right))\|_2 \leq \omega \|\boldsymbol{x} - \boldsymbol{y}\|_2$ (see App.~\fullversion{H
\added{ of \citet{ozcan2024learning}}}{\ref{app:conv_rate}\deleted{ of the supplement}} for computation steps). Thus, the root-finding routine inside $\mathtt{iDiffMF}$ has $\mathcal{O}\left(\frac{\log{(\epsilon (1-\omega)/\sqrt{|V|})}}{\log{\omega}}\right)$ time complexity.

\section{Experiments}\label{sec:exp}

We evaluate our proposed method on five datasets including set anomaly detection, product recommendation, and compound selection tasks (see Tab.~\ref{tab:datasets} for a datasets summary and App.~\fullversion{I
\added{ of \citet{ozcan2024learning}}}{\ref{app:exp_details}\deleted{ of the supplement}} for detailed dataset descriptions). The Gaussian and Moons are synthetic datasets, while the rest are real-world datasets. We closely follow the experimental setup of 
\citet{ou2022learning} 
  w.r.t.~competing algorithm setup, experiments, and metrics.\footnote{\added{\url{https://github.com/neu-spiral/LearnSetsImplicit}}}

\subsection{Algorithms}\label{sec:algs}

We compare three competitor algorithms from~\citep{ou2022learning} to three variants of our $\mathtt{iDiffMF}$ algorithm (Alg.~\ref{alg:i-diffMF}). Additional implementation details are in App.~\fullversion{I
\added{ of \citet{ozcan2024learning}}}{\ref{app:exp_details}\deleted{ of the supplement}}. 
\paragraph{$\mathtt{DiffMF}$~\citep{ou2022learning}:} This is the differentiable mean field variational inference algorithm described in Alg.~\ref{alg:diffMF}. As per Ou et al., we set the number of iterations  to $K = 5$ for all datasets.

\paragraph{$\mathtt{EquiVSet}_{\textmd{ind}}$~\citep{ou2022learning}:} This is the equivariant variational inference algorithm proposed by 
\citet{ou2022learning}. It is a variation of the $\mathtt{DiffMF}$ algorithm where the parameter $\boldsymbol{\psi}$ is predicted by an additional recognition network as a function of the data. As per 
\citet{ou2022learning}, we set  $K = 1$ for all datasets.

\paragraph{$\mathtt{EquiVSet}_{\textmd{copula}}$~\citep{ou2022learning}:} A correlation-aware version of the $\mathtt{EquiVSet}_{\textmd{ind}}$ algorithm where the relations among the input elements are modeled by a Gaussian copula. As per 
\citet{ou2022learning}, we set  $K = 1$ for all datasets.

\paragraph{$\mathtt{iDiffMF}$~(Alg.~\ref{alg:i-diffMF}):} Our proposed implicit alternative to the $\mathtt{DiffMF}$ algorithm where we solve the fixed-point condition in Eq.~\eqref{eq:fixed_point} with a low tolerance threshold ($\epsilon = 10^{-6}$), instead of running the fixed-point iterations in Eq.~\eqref{eq:iterative} for only a fixed number of times. 
Although DNNs are bounded, the exact computation of their Lipschitz constant is, even for two-layer Multi-Layer-Perceptrons (MLP), NP-hard \citep{virmaux2018lipschitz}. 
In our implementation, we use several heuristic approaches to satisfy the condition in Asm.~\ref{asm:bound}. First, we multiply the multilinear relaxation $\Tilde{F}$ by a constant scaling factor ${2}/{(|V| c)}$, treating $c$ as a hyperparameter.  We refer to this as $\mathtt{iDiffMF}_c$. We also consider a dynamic adaptation per batch and fixed-point iteration,  normalizing the gradient of the multilinear relaxation by its 
norm as well as size of the ground set; we describe this heuristic in App.~\fullversion{I.3
\added{ of \citet{ozcan2024learning}}}{\ref{app:normalizing}\deleted{ of the supplement}}. 
We propose two variants, termed $\mathtt{iDiffMF}_2$ and $\mathtt{iDiffMF}_*$, using $\ell_2$ $(\|\cdot\|_2)$ and nuclear $(\|\cdot\|_*)$ norms when scaling, respectively.   


For all algorithms, we use permutation-invariant NN architectures as introduced by Ou et al., described in App.~\fullversion{I.6
\added{ of \citet{ozcan2024learning}}}{\ref{app:archs}}. 
We report all experiment results 
with the best-performing hyperparameters 
based on a $5$-fold cross-validation
. More specifically, we partition each dataset to a training set and a hold out/test set (see Tab.~\ref{tab:datasets} for split ratios). We then divide the training dataset in $5$ folds. 
We identify the best hyperparameter combination through cross-validation across all folds. To produce standard-deviations, we then report the mean and the standard variation of the performance of the 5 models trained under the best hyperparameter combination on the test dataset.

We explore the following hyper-parameters: learning rate $\eta$, number of layers $L$, and different forward and backward solvers. Additional details, including ranges and optimal hyperparameter combinations, can be found in App.~\fullversion{I.7
\added{ of \citet{ozcan2024learning}}}{\ref{app:constant_exp}\deleted{ of the supplement}}.

We use the PyTorch code repository provided by \citet{ou2022learning} for all three competitor algorithms.\footnote{\url{https://github.com/SubsetSelection/EquiVSet}} 
We use the JAX+Flax framework~\citep{jax2018github, frostig2018compiling, flax2020github} 
 for its functional programming abilities for our $\mathtt{iDiffMF}$ implementations. In particular, we implement implicit differentiation using the JAXopt library~\citep{blondel2022efficient}. It offers a modular differentiation tool that can be combined with the existing solvers and it is readily integrated in JAX. We include our code in the supplementary material and will make it public after the review process.

 \begin{table}[!t]
\centering
\resizebox{.9\linewidth}{!}{%
\begin{tabular}{|c|l|c|c|c|c|c|c|c|c|}
\cline{2-10}
\multicolumn{1}{c|}{} & \textbf{Dataset} & $|\Omega|$ & $|\mathcal{D}|$ & $|V|$ & $ \text{avg}(|S^\ast|) $ & $\min (|S^*|)$ & $\max (|S^*|)$ & $d_f$ & Split ratio\\
\cline{2-10}
\multicolumn{1}{c|}{} & CelebA & $202,599$ & $10000$ & $8$ & $2.5$ & $2$ & $3$ & $128$& $11:1$\\
\multicolumn{1}{c|}{} & Gaussian & $100$ & $1000$ & $100$ & $10$ & $10$ & $10$ & $2$& $2:1$\\
\multicolumn{1}{c|}{} & Moons & $100$ & $1000$ & $100$ & $100$ & $10$ & $10$ & $2$& $2:1$\\
\hline
\multirow{12}{*}{\rotatebox[origin=c]{90}{Amazon}} & apparel & $100$ & $4,675$ & $30$ & $4.52$ & $3$ & $19$ & $768$& $2:1$\\
& bath & $100$ & $3,195$ & $30$ & $3.80$ & $3$ & $11$ & $768$ & $2:1$\\
& bedding & $100$ & $4,524$ & $30$ & $3.87$ & $3$ & $12$ & $768$& $2:1$\\
& carseats & $34$ & $483$ & $30$ & $3.26$ & $3$ & $6$ & $768$& $2:1$\\
& diaper    & $100$ & $6,108$ & $30$ & $4.14$ & $3$ & $15$ & $768$& $2:1$\\
& feeding    & $100$ & $8,202$ & $30$ & $4.62$ & $3$ & $2$  & $768$& $2:1$\\
& furniture  & $32$ & $280$ & $30$ & $3.18$ & $3$ & $6$ & $768$& $2:1$\\
& gear      & $100$ &$ 4,277$ & $30$ & $3.8$ & $3$ & $10$ & $768$& $2:1$\\
& health      & $62$ & $2,995$ & $30$ & $3.69$ & $3$ & $9$ & $768$& $2:1$\\
& media      & $58$ & $1,485$ & $30$ & $4.52$ & $3$ & $19$ & $768$& $2:1$\\
& safety      & $36$ & $267$ & $30$ & $3.16$ & $3$ & $5$ & $768$& $2:1$\\
& toys      & $62$ & $2,421$ & $30$ & $4.09$ & $3$ & $14$ & $768$& $2:1$\\
\hline
\multicolumn{1}{c|}{} & BindingDB & $52,273$ & $1,200$ & $300$ & $15$ & $15$ & $15$ & $512$& $11:1$\\ 
\cline{2-10}
\end{tabular}
}
\caption{Summary of the datasets. $\Omega$ denotes the universe with all possible options. $\mathcal{D}$ is the dataset with $(S^\ast, V)$ pairs. $V \subseteq \Omega$ is the ground set of options and $S^\ast$ is the optimal subset of $V$. $d_f$ is the size of the feature vector for each item in $\Omega$. The optimal subset, ground set $(S^\ast, V)$ pair selection/generation is instance specific and we describe these processes in the App.~\fullversion{I
\added{ of \citet{ozcan2024learning}}}{\ref{app:exp_details}\deleted{ of the supplement}}. Optimal subsets are subject to cardinality constraints. 
}
\label{tab:datasets}
\end{table}
 
\subsection{Metrics}

\begin{table*}[!t]
\centering
\resizebox{\linewidth}{!}{%
\begin{tabular}{|l|c|c|c|c|c|c|c|c|c|c|c|}
\cline{2-12}
\multicolumn{1}{c|}{} & \multirow{2}{*}{\textbf{Datasets}} & \multicolumn{2}{c|}{$\mathtt{EquiVSet}_{\textmd{ind}}$} & \multicolumn{2}{c|}{$\mathtt{EquiVSet}_{\textmd{copula}}$} & \multicolumn{2}{c|}{$\mathtt{DiffMF}$} & \multicolumn{2}{c|}{$\mathtt{iDiffMF}_2$}& \multicolumn{2}{c|}{$\mathtt{iDiffMF}_*$} \\
\cline{3-12}
 \multicolumn{1}{c|}{} & & \textbf{Test JC} & \textbf{Time} (s) & \textbf{Test JC} & \textbf{Time} (s) & \textbf{Test JC} & \textbf{Time} (s) & \textbf{Test JC} & \textbf{Time} (s) & \textbf{Test JC} & \textbf{Time} (s) \\
\hline
\multirow{3}{*}{\rotatebox[origin=c]{90}{\makecell{\small AD}}} & CelebA & $55.02\pm0.20$ & $1151.17\pm698.13$ & $56.16\pm0.81$ & $1195.47\pm731.84$ & $54.42\pm0.70$ & $ 1299.13\pm984.20$ & $\underline{56.30\pm0.58}$ & $1880.86\pm266.84$ & $\mathbf{56.55\pm0.49}$ & $1827.76\pm472.78$\\
 & Gaussian & $90.55\pm0.06$ & $30.68\pm3.86$ & $90.94\pm0.09$ & $39.11\pm6.09$ & $90.96\pm0.05$ & $85.75\pm35.82$ & $\underline{90.95\pm0.18}$ & $39.41\pm3.64$ & $\mathbf{91.03\pm0.09}$ & $46.05\pm3.44$\\
 & Moons & $57.76\pm0.11$ & $66.99\pm 4.43$ & $\underline{58.67\pm0.18}$ & $ 62.03\pm6.82$ & $58.45\pm0.15$& $58.24\pm3.01$ & $\underline{58.48\pm0.15}$ & $70.26\pm13.96$ & $\mathbf{58.97\pm0.04}$ & $53.80\pm10.79$\\
\hline
\multirow{12}{*}{\rotatebox[origin=c]{90}{\small PR (Amazon)}} & apparel & $68.45\pm0.96$ & $38.32\pm6.63$ & $\mathbf{78.19\pm0.89}$ & $77.14\pm14.37$ & $70.60\pm1.35$ & $63.06\pm16.12$ & $\underline{76.13\pm4.65}$ & $110.74\pm42.12$ & $73.80\pm5.71$ & $98.43\pm35.07$ \\

& bath & $67.51\pm1.19$ & $34.01\pm5.89$ & $\mathbf{77.72\pm1.98}$ & $53.29\pm6.68$ & $71.87\pm0.27$ & $61.84\pm12.73$ & $\underline{77.68\pm0.98}$ & $70.90\pm12.26$ & $76.43\pm0.81$ & $80.12\pm14.43$ \\
& bedding & $66.20\pm1.10$ & $40.99\pm3.59$ & $\underline{77.26\pm1.24}$ & $67.13\pm12.78$ & $67.66\pm0.39$ & $72.69\pm7.73$ & $\mathbf{77.88\pm0.80}$ & $103.64\pm17.76$& $76.94\pm1.05$ & $88.56\pm19.34$ \\
& carseats & $19.99\pm1.01$ & $12.38\pm4.19$ & $20.03\pm0.15$ & $12.19\pm2.71$ & $20.15\pm0.65$ & $10.53\pm5.01$ & $\underline{21.94\pm1.43}$ & $40.98\pm7.77$ & $\mathbf{22.42\pm1.04}$ & $45.00\pm8.46$ \\
& diaper & $74.26\pm0.73$ & $60.96\pm17.79$ & $\mathbf{83.66\pm0.69}$ & $193.55 \pm 80.28$ & $81.74\pm1.18$ & $95.22\pm10.54$ & $\underline{82.76\pm0.62}$ & $127.09\pm23.58$ & $82.07\pm0.90$ & $144.33\pm30.65$\\
& feeding & $71.46\pm0.43$ & $68.43\pm26.08$ & $\mathbf{82.47\pm0.19}$ & $95.18\pm21.75$ & $77.44\pm0.46$ & $93.27\pm18.81$ & $\underline{81.93\pm1.00}$ & $145.46\pm50.13$ & $81.52\pm1.84$ & $179.65\pm18.05$ \\
& furniture & $17.28\pm0.88$ &  $10.98\pm2.44$ & $17.95\pm0.80$ & $10.03\pm3.23$ & $16.84\pm0.05$ & $9.31\pm1.79$ & $\mathbf{19.93\pm2.68}$ & $34.31\pm6.03$ & $\underline{18.69\pm0.93}$ & $34.30\pm5.96$ \\

& gear & $65.35\pm0.91$ & $40.89\pm3.19$ & $\mathbf{77.33\pm0.90}$ & $69.44 \pm 10.22$ & $66.06\pm2.86$ & $60.95\pm10.38$ & $\underline{73.90\pm10.29}$ & $92.30\pm45.44$ & $73.57\pm6.74$ & $132.10\pm30.14$ \\

& health & $63.04\pm0.41$  & $33.51\pm5.22$ & $72.03\pm0.77$ & $60.18\pm6.31$ & $59.64\pm0.81$ & $51.66\pm2.54$ &  $\mathbf{72.55\pm1.10}$ & $78.65\pm13.01$ & $\underline{72.32\pm1.03}$ & $88.71\pm21.76$ \\
& media & $\mathbf{56.60\pm0.56}$ & $37.45\pm11.06$ & $55.73\pm1.18$ & $45.02\pm4.95$ & $51.32\pm1.11$ & $40.69\pm4.65$ &$\underline{56.39\pm2.68}$ & $65.15\pm18.41$ & $55.58\pm1.75$ & $67.83 \pm 21.18$ \\
& safety & $21.99\pm1.85$ &  $10.39\pm1.87$ & $22.09\pm3.30$ &  $13.14\pm3.13$ & $24.66\pm5.56$ &  $8.59\pm1.31$  & $\mathbf{26.02\pm1.68}$ & $47.66\pm8.62$ & $\underline{25.38 \pm 1.88}$ & $44.63\pm6.45$ \\
& toys & $62.36\pm1.31$ & $34.06\pm6.69$ & $\mathbf{69.08\pm1.04}$ & $47.81\pm9.46$ & $64.39\pm1.64$ & $43.96\pm6.89$  & $68.53\pm1.35$ & $68.34\pm17.82$ & \underline{$68.91 \pm 1.00$} & $80.30 \pm 18.76$ \\
\hline
\rotatebox[origin=c]{90}{\makecell{\small CS}} & BindingDB & $73.59\pm0.75$ & $9934.30 \pm 2591.36$ & $73.57 \pm 2.05$ & $13983.93 \pm 4458.52$ & $73.22\pm 1.08$ & $21472.44 \pm 3239.73$ & $\underline{76.83 \pm 0.50}$ & $10887.64 \pm 1709.79$ & $\mathbf{77.48	\pm 1.04}$ & $10612.98 \pm 946.64$\\
\hline
\end{tabular}
}
\caption{
Test Jaccard Coefficient (JC) and training time for set anomaly detection (AD), product recommendation (PR), and compound selection (CS) tasks, across all five algorithms. $\mathtt{iDiffMF}_2$ and $\mathtt{iDiffMF}_*$ correspond to our algorithm with Frobenius and nuclear norm scaling. \textbf{Bold} and \underline{underline} indicate the best and second-best performance results, respectively. The confidence intervals on the table come from the standard variation of the measurements between folds during cross-validation.}
\label{tab:idiffmf-performance-others}
\end{table*}

\begin{figure*}[!t]
    \centering
    \includegraphics[width=0.94\linewidth]{./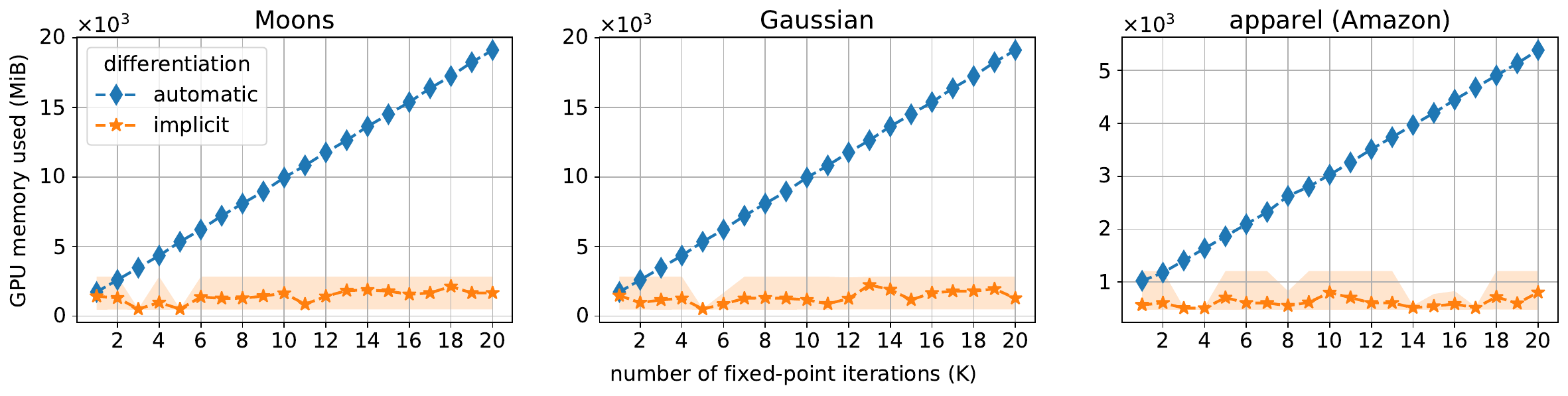}
    \caption{Effects of the choice of differentiation method on the relationship between the allocated GPU memory 
    and the number of fixed-point iterations across different datasets. Blue lines represent automatic differentiation ($\mathtt{DiffMF}$), while the orange lines represent implicit differentiation ($\mathtt{iDiffMF}$). The markers denote the average memory usage. The area between the recorded minimum and maximum memory usage is shaded.}
    \label{fig:memory_analysis}
\end{figure*}

Following 
\citet{ou2022learning}, we measure the performance of different algorithms by
(a) using the trained neural network to predict the optimal subsets corresponding to each query on the test set, and (b) measure the mean Jaccard Coefficient (JC) score across all predictions. 
We describe how the trained objective $F_{\boldsymbol{\theta}}(\cdot)$ is used to produce an optimal subset $\hat{S}^\ast_i$ given query $V_i$ in the test set in App.~\fullversion{I.5
\added{ of \citet{ozcan2024learning}}}{\ref{app:inference}}.

We also measure the running time and the GPU memory usage of the algorithms. During training, we track the amount of memory used every $5$ seconds with the $\texttt{nvidia-smi}$ command while varying the number of maximum iterations. For each number of maximum iterations, we report the minimum, maximum, and average memory usage. 



\subsection{Results}
We report the predictive performance of our proposed $\mathtt{iDiffMF}_2$ and $\mathtt{iDiffMF}_*$ methods against the existing $\mathtt{DiffMF}$ method and its variants on Tab.~\ref{tab:idiffmf-performance-others}, 
and $\mathtt{iDiffMF}_c$ in App.~\fullversion{I.7
\added{ of \citet{ozcan2024learning}}}{\ref{app:constant_exp}\deleted{ of the supplement}}.  
%
%
For the vast majority of the test cases, $\mathtt{iDiffMF}$ variants achieve either the best or the second-best JC score. While the next best competitor, $\mathtt{EquiVSet}_{\textmd{copula}}$, performs the best on some datasets, 
its performance is not  consistent on the remaining datasets, not being even the second best. 
For the Amazon carseats, furniture and safety datasets, $\mathtt{iDiffMF}$ variants give significantly better results than $\mathtt{EquiVSet}_{\textmd{copula}}$, even though $\mathtt{EquiVSet}_{\textmd{copula}}$ is faster. This is probably because $\mathtt{EquiVSet}_{\textmd{copula}}$ converges to a local optimum and finishes training earlier. It is also important to highlight that we evaluate $\mathtt{iDiffMF}$ using JAX+Flax while we use PyTorch to evaluate the baselines. Therefore, the differences in running time can also be explained with the framework differences. Even though $\mathtt{iDiffMF}$ executes fixed-point iterations until convergence, as opposed to $K=1$ or $K=5$ in remaining methods \cite{ou2022learning}, the average running times are comparable across datasets.

In Fig.~\ref{fig:memory_analysis}, we demonstrate the advantages of using implicit differentiation in terms of space complexity. As discussed in Sec.~\ref{sec:memory}, memory requirements remain constant in an interval as the number of fixed-point iterations increases during implicit differentiation. On the contrast, memory requirements increase linearly with the number of iterations during automatic differentiation.

\section{Conclusion}\label{sec:conclusions}
We improve upon an existing learning set functions with an optimal subset oracle setting by characterizing the convergence condition of the fixed point iterations resulting during MLE approximation and by using implicit differentiation over automatic differentiation. Our results perform better than or comparable to the baselines for the majority of the cases without the need of an additional recognition network while requiring less memory. 

\added{
\subsubsection*{Acknowledgments}
We gratefully acknowledge support from the National Science Foundation (grant 1750539).
}


\bibliography{references}

\fullversion{}{
\clearpage
\newpage
\onecolumn
\appendix
\section*{Appendix}
{

\section{Permutation Invariance}\label{app:perm_inv}
In this section, we formally define permutation invariant functions and state the relationship between sum-decomposable and permutation-invariant functions following the works of \citet{zaheer2017deep} and \citet{wagstaff2019limitations}. We use these definitions to explain how we also enforce the property of permutation invariance in this work.
\begin{definition} \cite[Property 1]{zaheer2017deep}
     A function $f(\mathbf{x})$ is \emph{permutation-invariant} if $f(x_1, \ldots, x_M) = f(x_{\pi (1)}, \ldots, x_{\pi (M)})$ for all $\pi$ permutations.
\end{definition}

\begin{definition} \cite[Definition 2.2]{wagstaff2019limitations}
    A set function $f$ is \emph{sum-decomposable} if there are functions $\rho$ and $\phi$ such that
    $$f(S) = \rho \left(\sum_{s \in S} \phi (s)\right).$$
\end{definition}

\begin{theorem} \cite[Theorem 2]{zaheer2017deep} \cite[Theorem 2.8]{wagstaff2019limitations}
    Let $f: 2^S \rightarrow \mathbb{R}$ where $S$ is countable. Then, $f$ is permutation-invariant if and only if it is sum-decomposable via $\mathbb{R}$.
\end{theorem}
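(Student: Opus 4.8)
The plan is to prove the two implications separately, with the reverse implication (permutation invariance $\Rightarrow$ sum-decomposability) being where the real work lies; the forward implication is essentially immediate.

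First I would dispatch the easy direction. If $f$ is sum-decomposable, so that $f(S) = \rho\bigl(\sum_{s \in S} \phi(s)\bigr)$ for some $\phi$ and $\rho$, then for any permutation $\pi$ the inner sum $\sum_{s\in S}\phi(s)$ is unchanged, since (finite or absolutely convergent) summation does not depend on the order of its terms. Hence $f$ takes the same value on $S$ and on any reordering of its elements, i.e.\ $f$ is permutation-invariant. This uses nothing about $S$ beyond countability ensuring the sum is well defined.

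For the converse, the strategy is to build a single real-valued feature map $\phi: S \to \mathbb{R}$ whose induced set-sum is \emph{injective}, and then read off $\rho$ as the corresponding inverse composed with $f$. Since $S$ is countable, fix a bijection $c: S \to \mathbb{N}$ and set $\phi(x) = 4^{-c(x)}$. Define $\Phi(X) = \sum_{x \in X} \phi(x)$ for $X \subseteq S$; the geometric decay of $\phi$ guarantees absolute convergence even when $X$ is infinite, so $\Phi: 2^S \to \mathbb{R}$ is well defined. The key step is injectivity of $\Phi$: the value $\Phi(X)$ is precisely the real number whose base-$4$ expansion carries a digit $1$ in position $c(x)$ for each $x \in X$ and a digit $0$ elsewhere. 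Because every digit lies in $\{0,1\}$ while base $4$ admits digits up to $3$, no carrying occurs and the usual base-$2$ boundary ambiguity $0.0111\ldots = 0.1000\ldots$ is ruled out, so distinct subsets yield distinct values of $\Phi$. Given injectivity, define $\rho$ on the image $\Phi(2^S)$ by $\rho(\Phi(X)) \triangleq f(X)$ — well defined precisely because $\Phi$ is one-to-one — and extend $\rho$ arbitrarily (say by $0$) to the rest of $\mathbb{R}$. Then $f(X) = \rho\bigl(\sum_{x\in X}\phi(x)\bigr)$ for every $X$, which is the desired sum-decomposition via $\mathbb{R}$.

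I expect the main obstacle to be the injectivity argument together with the convergence bookkeeping in the countably-infinite case: one must pick the base large enough (here $4$) relative to the achievable per-position digits $\{0,1\}$ to keep the encoding collision-free, while simultaneously ensuring the infinite sums defining $\Phi$ converge and preserve uniqueness of representation. Note that the permutation-invariance hypothesis is used only to guarantee that $f(X)$ depends on $X$ as an \emph{unordered} set, which is exactly what makes $\rho\circ\Phi$ a consistent (single-valued) definition.
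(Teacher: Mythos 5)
The paper does not prove this statement at all: it is quoted verbatim from \citet{zaheer2017deep} (Theorem 2) and \citet{wagstaff2019limitations} (Theorem 2.8) and used as a black box to justify the permutation-invariant architecture, so there is no in-paper proof to compare against. Your reconstruction is, in substance, the standard argument from those cited works, and it is correct: the easy direction follows from order-independence of (absolutely convergent) sums, and the hard direction rests on the injective encoding $\Phi(X)=\sum_{x\in X}4^{-c(x)}$, where the choice of base $4$ with per-position digits restricted to $\{0,1\}$ both forces absolute convergence for infinite subsets of the countable domain and eliminates the trailing-digit ambiguity that would break injectivity in base $2$; defining $\rho$ as $f\circ\Phi^{-1}$ on the image and extending arbitrarily then gives the sum-decomposition. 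The one cosmetic point worth noting is that for a function already defined on $2^S$, ``permutation invariance'' is automatic, so the substantive content of the theorem (and of your proof) is really the universal sum-decomposability of set functions over a countable universe; your closing remark correctly identifies where that hypothesis enters.
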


In \citet{ou2022learning}, $\phi$ is a dataset specific initial layer that takes set elements, $s$, as inputs and transforms them into some representation $\phi (s)$. These representations are added up and go through $\rho$, a fully connected feed forward neural network. We use the same architectures for $\phi$ and $\rho$ (see App.~\ref{app:archs}). As a result, our model satisfies the permutation invariance property.

\added{
\section{Proof of Equation~\eqref{eq:ELBO}}\label{app:KL-ELBO}
\begin{proof}
    Starting from the definition of the KL divergence, we get:
    \begin{align*}
        \mathbb{KL} (q(S, \boldsymbol{\psi}) || p_{\boldsymbol{\theta}} (S)) &= \sum_{S \subseteq V} q(S, \boldsymbol{\psi}) \log\frac{q(S, \boldsymbol{\psi})}{p_{\boldsymbol{\theta}} (S)} \\
        &= \sum_{S \subseteq V} q(S, \boldsymbol{\psi}) \left(\log q(S, \boldsymbol{\psi}) - \log p_{\boldsymbol{\theta}}(S) \right) \\
        &= \sum_{S \subseteq V}  \left(q(S, \boldsymbol{\psi})\log q(S, \boldsymbol{\psi}) - q(S, \boldsymbol{\psi})\log p_{\boldsymbol{\theta}}(S)\right) \\
        &= \sum_{S \subseteq V}  q(S, \boldsymbol{\psi})\log q(S, \boldsymbol{\psi}) - \sum_{S \subseteq V} q(S, \boldsymbol{\psi})\log p_{\boldsymbol{\theta}}(S) \\ 
        &= -\mathbb{H}(q(S, \boldsymbol{\psi})) - \mathbb{E}_{q(S, \boldsymbol{\psi})}[\log p_{\boldsymbol{\theta}}(S)].\\ 
    \end{align*}
    Observe that, by Eq.~\eqref{eq:EBM}:
    \begin{align*}
    \mathbb{E}_{q(S, \boldsymbol{\psi})}[\log p_{\boldsymbol{\theta}}(S)] = \mathbb{E}_{q(S, \boldsymbol{\psi})}[F_{\boldsymbol{\theta}}(S)] - Z  =  \Tilde{F} (\boldsymbol{\psi}, \boldsymbol{\theta})-Z,
    \end{align*}
    where $Z\equiv\sum_{S' \subseteq V} \exp{(F_{\boldsymbol{\theta}}(S',\, V))}$ does not depend on $\boldsymbol{\psi}$ and thus can be dropped, and $\Tilde{F} (\boldsymbol{\psi}, \boldsymbol{\theta})$ is the multilinear relaxation. 
    Therefore, minimizing the KL divergence w.r.t. $\boldsymbol{\psi}$ is  equivalent to maximizing 
    $\Tilde{F} (\boldsymbol{\psi}, \boldsymbol{\theta}) + \mathbb{H}(q(S, \boldsymbol{\psi})).$ In summary,
    \begin{equation}
    \min_{\boldsymbol{\psi}} \mathbb{KL} (q(S, \boldsymbol{\psi}) || p_{\boldsymbol{\theta}} (S)) \Longleftrightarrow \max_{\boldsymbol{\psi}} \underbrace{\Tilde{F} (\boldsymbol{\psi}, \boldsymbol{\theta}) + \mathbb{H} (q(S, \boldsymbol{\psi}))}_\text{ELBO}.
\end{equation}
\end{proof}
}

\section{Derivations for Sec.~\ref{sec: setup}}
\subsection{Derivation of the Fixed-Point}\label{app:fixed_point_derivation}
Rewriting the ELBO by plugging in the definition of entropy, 
\begin{equation}\label{eq:elbo_w_entropy}
        \Tilde{F}(\boldsymbol{\psi}, \boldsymbol{\theta}) + \mathbb{H} (q(S, \boldsymbol{\psi})) =  \Tilde{F} (\boldsymbol{\psi}, \boldsymbol{\theta}) - \sum_{i=1}^{|V|}\left[\psi_i\log \psi_i + (1-\psi_i)\log (1 - \psi_i)\right].
\end{equation}
Taking the partial derivative of this expression with respect to the $i^{\text{th}}$ coordinate and setting it to zero, yields
\begin{gather*}
    \frac{\partial \Tilde{F} (\boldsymbol{\psi}, \boldsymbol{\theta})}{\partial \psi_i} - \log \frac{\psi_i}{1 - \psi_i} = 0,\\
        \exp{\frac{\partial \Tilde{F} (\boldsymbol{\psi}, \boldsymbol{\theta})}{\partial \psi_i}} = \frac{\psi_i}{1 - \psi_i},\\
        \exp{\frac{\partial \Tilde{F} (\boldsymbol{\psi}, \boldsymbol{\theta})}{\partial \psi_i}} - \psi_i \exp{\left(\frac{\partial \Tilde{F} (\boldsymbol{\psi}, \boldsymbol{\theta})}{\partial \psi_i}\right)} = \psi_i,\\
        \exp{\left(\frac{\partial \Tilde{F} (\boldsymbol{\psi}, \boldsymbol{\theta})}{\partial \psi_i}\right)} = \psi_i \left(1 + \exp{\left(\frac{\partial \Tilde{F} (\boldsymbol{\psi}, \boldsymbol{\theta})}{\partial \psi_i}\right)}\right),\\
        \psi_i = \frac{\exp{\left(\frac{\partial \Tilde{F} (\boldsymbol{\psi}, \boldsymbol{\theta})}{\partial \psi_i}\right)}}{1 + \exp{\left(\frac{\partial \Tilde{F} (\boldsymbol{\psi}, \boldsymbol{\theta})}{\partial \psi_i}\right)}}= \frac{1}{1 + \exp{\left(-\frac{\partial \Tilde{F} (\boldsymbol{\psi}, \boldsymbol{\theta})}{\partial \psi_i}\right)}} = \sigma \left(\frac{\partial \Tilde{F} (\boldsymbol{\psi}, \boldsymbol{\theta})}{\partial \psi_i}\right),
\end{gather*}    
where $\sigma(x) = (1 + \exp{(-x)})^{-1}$ is the sigmoid function.

\subsection{Gradient Computation via Sampling}\label{app:sampling}
\begin{lemma} \label{lem:multigradset}
    Given a set function $F: 2^V \rightarrow \mathbb{R}$ and a vector of probabilities $\boldsymbol{\psi} \in [0, 1]^
    {|V|}$ where $\psi_i = \mathbb{P}[i \in S]$, the gradient of the multilinear relaxation of the set function $F (S)$ is $$\frac{\partial \Tilde{F}(\boldsymbol{\psi})}{\partial{\psi_i}} = \mathbb{E}_{S \sim \boldsymbol{\psi} | \psi_i \leftarrow 0} [F (S + i) - F (S)].$$
\end{lemma}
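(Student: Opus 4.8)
The plan is to prove the identity
$$\frac{\partial \Tilde{F}(\boldsymbol{\psi})}{\partial{\psi_i}} = \mathbb{E}_{S \sim \boldsymbol{\psi} | \psi_i \leftarrow 0} [F (S + i) - F (S)]$$
by direct differentiation of the multilinear relaxation, exploiting the fact that $\Tilde{F}$ is affine (degree one) in each coordinate $\psi_i$ separately. First I would recall the definition from Eq.~\eqref{eq:mult_ext}, namely $\Tilde{F}(\boldsymbol{\psi}) = \sum_{S \subseteq V} F(S) \prod_{j \in S} \psi_j \prod_{j \in V \setminus S}(1 - \psi_j)$, and observe that every summand contains the variable $\psi_i$ exactly once: a term indexed by $S$ contributes a factor $\psi_i$ if $i \in S$ and a factor $(1 - \psi_i)$ if $i \notin S$. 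This is the key structural observation that makes the derivative clean.

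Next I would split the sum over subsets $S \subseteq V$ into those containing $i$ and those not containing $i$. Writing a generic subset not containing $i$ as $S'$ (so $S' \subseteq V \setminus \{i\}$), the subsets containing $i$ are exactly $S' \cup \{i\}$. This gives
\begin{equation*}
\Tilde{F}(\boldsymbol{\psi}) = \sum_{S' \subseteq V \setminus \{i\}} \left[ \psi_i\, F(S' + i) + (1 - \psi_i)\, F(S') \right] \prod_{j \in S'} \psi_j \prod_{j \in (V \setminus \{i\}) \setminus S'} (1 - \psi_j),
\end{equation*}
where I have factored out the $\psi_i$-dependent factor from the product over the remaining coordinates. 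Differentiating with respect to $\psi_i$, the bracketed expression has derivative $F(S' + i) - F(S')$, while the remaining product is independent of $\psi_i$, so
\begin{equation*}
\frac{\partial \Tilde{F}(\boldsymbol{\psi})}{\partial \psi_i} = \sum_{S' \subseteq V \setminus \{i\}} \left[ F(S' + i) - F(S') \right] \prod_{j \in S'} \psi_j \prod_{j \in (V \setminus \{i\}) \setminus S'} (1 - \psi_j).
\end{equation*}

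Finally I would recognize the right-hand side as an expectation. The product $\prod_{j \in S'} \psi_j \prod_{j \in (V \setminus \{i\}) \setminus S'} (1 - \psi_j)$ is precisely the probability of drawing the subset $S'$ from the product distribution over $V \setminus \{i\}$ with inclusion probabilities $\psi_j$ --- equivalently, the distribution $q$ with $\psi_i$ clamped to $0$, which never includes $i$. Hence the sum equals $\mathbb{E}_{S \sim \boldsymbol{\psi} | \psi_i \leftarrow 0}[F(S + i) - F(S)]$, completing the proof. I do not anticipate a genuine obstacle here: the argument is elementary once the multilinearity-in-each-coordinate is used, and the only point requiring mild care is bookkeeping the index sets correctly when splitting off coordinate $i$ and matching the resulting product to the clamped product distribution.
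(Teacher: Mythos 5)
Your proof is correct and follows essentially the same route as the paper's: both exploit the fact that $\Tilde{F}$ is affine in $\psi_i$, with the paper passing through the intermediate identity $\partial\Tilde{F}/\partial\psi_i = \mathbb{E}_{S\sim\boldsymbol{\psi}|\psi_i\leftarrow 1}[F(S)] - \mathbb{E}_{S\sim\boldsymbol{\psi}|\psi_i\leftarrow 0}[F(S)]$ before pairing subsets $S'$ and $S'\cup\{i\}$, exactly as you do directly. Your bookkeeping of the index sets and the identification of the clamped product distribution match the paper's final two lines.
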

\begin{proof}
    \begin{equation*}
        \begin{split}
            \frac{\partial \Tilde{F}(\boldsymbol{\psi})}{\partial{\psi_i}} &= \added{\frac{\partial}{\partial \psi_i}} \sum_{S \subseteq V} F(S) \prod_{i \in S} \psi_i \prod_{i \notin S} (1-\psi_i),\\
            &= \mathbb{E}_{S \sim \boldsymbol{\psi} | \psi_i \leftarrow 1} [F(S)] - \mathbb{E}_{S \sim \boldsymbol{\psi} | \psi_i \leftarrow 0} [F(S)],\\
            &= \sum_{S \subseteq V,\, i \in S} F(S) \prod_{j \in S \setminus \{i\}} \psi_j \prod_{j \notin S} (1-\psi_j) - \sum_{S \subseteq V \setminus \{i\}} F(S) \prod_{j \in S} \psi_j \prod_{j \notin S,\, j \neq i} (1-\psi_j),\\
            &= \sum_{S \subseteq V \setminus \{i\}} [F(S + i) - F(S)] \prod_{j \in S} \psi_j \prod_{j \notin S} (1-\psi_j),\\
            &= \mathbb{E}_{S \sim \boldsymbol{\psi} | \psi_i \leftarrow 0} [F(S + i) - F(S)].
        \end{split}
    \end{equation*}
\end{proof}

\added{In the proof above, the fourth line holds because on the first summation of the right-hand side, $i$ is included in all instances of $S$. This is equivalent to iterating over all $S$ that exclude $i$ and then adding $i$ to these sets. In the expectation $\mathbb{E}_{S \sim \boldsymbol{\psi} | \psi_i \leftarrow 0} [F(S + i) - F(S)]$, we are sampling a set $S$ based on $\boldsymbol{\psi}$ where $\psi_i = 0$. Then, we are adding the element $i$ to $S$.}

\begin{corollary}\label{cor:grad}
    Knowing Lemma~\ref{lem:multigradset}, the gradient of the multilinear relaxation $\Tilde{F}(\boldsymbol{\psi}, \boldsymbol{\theta})$, is defined as follows
    \begin{equation}\label{eq:grad}
        \begin{split}
             \frac{\partial \Tilde{F}(\boldsymbol{\psi}, \boldsymbol{\theta})}{\partial{\psi_i}} = \Tilde{F}([\boldsymbol{\psi}]_{+i}, \boldsymbol{\theta}) - \Tilde{F} ([\boldsymbol{\psi}]_{-i}, \boldsymbol{\theta})\replaced{,}{.}
        \end{split}
    \end{equation}
    \added{where $[\boldsymbol{\psi}]_{+i}$, $[\boldsymbol{\psi}]_{-i}$ are operands setting $\psi_i = 1$ and $\psi_i = 0$, respectively.}
\end{corollary}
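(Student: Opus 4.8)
The plan is to read off the result from the intermediate identity already established inside the proof of Lemma~\ref{lem:multigradset}, so almost all the work is reused. Recall that the multilinear relaxation is by definition the expectation $\Tilde{F}(\boldsymbol{\psi}, \boldsymbol{\theta}) = \mathbb{E}_{S \sim \boldsymbol{\psi}}[F_{\boldsymbol{\theta}}(S)]$, where $S \sim \boldsymbol{\psi}$ denotes the product Bernoulli distribution that includes each element $j \in V$ independently with probability $\psi_j$. The second line of the proof of Lemma~\ref{lem:multigradset} gives $\partial \Tilde{F}(\boldsymbol{\psi},\boldsymbol{\theta}) / \partial \psi_i = \mathbb{E}_{S \sim \boldsymbol{\psi}\,|\,\psi_i \leftarrow 1}[F_{\boldsymbol{\theta}}(S)] - \mathbb{E}_{S \sim \boldsymbol{\psi}\,|\,\psi_i \leftarrow 0}[F_{\boldsymbol{\theta}}(S)]$. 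The heart of the argument is then to recognize that each of these two conditional expectations is itself a multilinear relaxation evaluated at a modified probability vector.

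First I would make this identification precise. Conditioning the product distribution on $\psi_i \leftarrow 1$ is the same as drawing $S$ from the product Bernoulli distribution associated with the vector $[\boldsymbol{\psi}]_{+i}$, i.e., $\boldsymbol{\psi}$ with its $i$-th coordinate overwritten by $1$; hence $\mathbb{E}_{S \sim \boldsymbol{\psi}\,|\,\psi_i \leftarrow 1}[F_{\boldsymbol{\theta}}(S)] = \Tilde{F}([\boldsymbol{\psi}]_{+i}, \boldsymbol{\theta})$. The symmetric statement for $\psi_i \leftarrow 0$ gives $\mathbb{E}_{S \sim \boldsymbol{\psi}\,|\,\psi_i \leftarrow 0}[F_{\boldsymbol{\theta}}(S)] = \Tilde{F}([\boldsymbol{\psi}]_{-i}, \boldsymbol{\theta})$. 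Substituting both identities into the expression for the partial derivative yields Eq.~\eqref{eq:grad} directly, with no further computation required.

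An equivalent route, which I would record as a sanity check, exploits multilinearity of $\Tilde{F}$ in the coordinate $\psi_i$: since $\Tilde{F}$ is affine in $\psi_i$, it can be written as $\Tilde{F}(\boldsymbol{\psi}, \boldsymbol{\theta}) = \psi_i \Tilde{F}([\boldsymbol{\psi}]_{+i}, \boldsymbol{\theta}) + (1-\psi_i)\Tilde{F}([\boldsymbol{\psi}]_{-i}, \boldsymbol{\theta})$, and differentiating in $\psi_i$ recovers the same formula. The only genuine obstacle here is bookkeeping: I must verify that overwriting a single coordinate of $\boldsymbol{\psi}$ corresponds exactly to the conditioning used in the lemma, and that the remaining coordinates are left untouched. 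Since every factor in the definition of $\Tilde{F}$ other than the $i$-th one splits cleanly across the $\psi_i$ and $(1-\psi_i)$ terms, this verification is routine and no analytic difficulty arises.
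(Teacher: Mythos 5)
Your proposal is correct and matches the paper's (implicit) argument: the paper derives $\partial\Tilde{F}/\partial\psi_i = \mathbb{E}_{S\sim\boldsymbol{\psi}|\psi_i\leftarrow 1}[F(S)] - \mathbb{E}_{S\sim\boldsymbol{\psi}|\psi_i\leftarrow 0}[F(S)]$ inside the proof of Lemma~\ref{lem:multigradset} and the corollary simply rewrites these two conditional expectations as $\Tilde{F}([\boldsymbol{\psi}]_{+i},\boldsymbol{\theta})$ and $\Tilde{F}([\boldsymbol{\psi}]_{-i},\boldsymbol{\theta})$, exactly the identification you make. Your alternative check via multilinearity in $\psi_i$ is a standard equivalent route and adds nothing problematic.
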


\added{Note that this derivation is a classic \citep{calinescu2011maximizing} and} Eq.~\eqref{eq:grad} 
can be computed by producing random samples of $S$.

\section{Technical Preliminaries}\label{app:prelim}

\begin{theorem}{(Multivariate Mean Value Theorem \citep{apostol1974mathematical, rudin1976principles, burke2014nonlinear})}\label{thm:MMVT}
    If $\boldsymbol{f}: \mathbb{R}^n \rightarrow \mathbb{R}^m$ is continuously differentiable, then for every $\boldsymbol{x}, \boldsymbol{y} \in \mathbb{R}^n$, there exists a $\boldsymbol{z} \in [\boldsymbol{x}, \boldsymbol{y}]$, such that
    \begin{equation*}
        \|\boldsymbol{f}(\boldsymbol{x}) - \boldsymbol{f}(\boldsymbol{y})\|_2 \leq \sup_{\boldsymbol{z} \in [\boldsymbol{x}, \boldsymbol{y}]} \|\partial \boldsymbol{f}(\boldsymbol{z})\|_{F} \|\boldsymbol{x} - \boldsymbol{y}\|_2,
    \end{equation*}
    where $\|\cdot\|_2$ is the $L_2$ norm and $\|\cdot\|_F$ is the Frobenius norm.
\end{theorem}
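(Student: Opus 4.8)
The plan is to reduce this multivariate statement to the classical one-dimensional mean value theorem by restricting $\boldsymbol{f}$ to the line segment joining the two points and projecting the vector difference onto a fixed direction. The crucial conceptual point to keep in mind is that the naive vector equality $\boldsymbol{f}(\boldsymbol{x}) - \boldsymbol{f}(\boldsymbol{y}) = \partial\boldsymbol{f}(\boldsymbol{z})(\boldsymbol{x} - \boldsymbol{y})$ for a \emph{single} $\boldsymbol{z}$ is false as soon as $m > 1$, since the $m$ coordinate functions generally require different mean-value points; this is precisely why the result is stated as an inequality rather than an equality, and it forces us to work with a scalar surrogate rather than with the vector difference directly.

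First I would dispose of the trivial case $\boldsymbol{f}(\boldsymbol{x}) = \boldsymbol{f}(\boldsymbol{y})$, where the left-hand side vanishes and the inequality holds for any choice of $\boldsymbol{z}$; otherwise set $\boldsymbol{v} \triangleq \boldsymbol{f}(\boldsymbol{x}) - \boldsymbol{f}(\boldsymbol{y}) \neq \boldsymbol{0}$. Then I would introduce the scalar function $\phi(t) \triangleq \langle \boldsymbol{v}, \boldsymbol{f}(\boldsymbol{y} + t(\boldsymbol{x} - \boldsymbol{y})) \rangle$ on $[0,1]$, which is continuously differentiable because $\boldsymbol{f}$ is. Its total increment is $\phi(1) - \phi(0) = \langle \boldsymbol{v}, \boldsymbol{f}(\boldsymbol{x}) - \boldsymbol{f}(\boldsymbol{y}) \rangle = \|\boldsymbol{v}\|_2^2$, and by the chain rule $\phi'(t) = \langle \boldsymbol{v}, \partial\boldsymbol{f}(\boldsymbol{y} + t(\boldsymbol{x}-\boldsymbol{y}))\,(\boldsymbol{x}-\boldsymbol{y}) \rangle$, so $\phi'$ repackages the full Jacobian action along the segment.

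Applying the ordinary scalar mean value theorem to $\phi$ then yields a point $t^{*} \in (0,1)$ with $\phi(1)-\phi(0) = \phi'(t^{*})$; writing $\boldsymbol{z} \triangleq \boldsymbol{y} + t^{*}(\boldsymbol{x}-\boldsymbol{y}) \in [\boldsymbol{x}, \boldsymbol{y}]$ supplies the concrete witness the statement asks for. I would then chain two standard estimates: Cauchy--Schwarz, $\langle \boldsymbol{v}, \partial\boldsymbol{f}(\boldsymbol{z})(\boldsymbol{x}-\boldsymbol{y})\rangle \le \|\boldsymbol{v}\|_2\,\|\partial\boldsymbol{f}(\boldsymbol{z})(\boldsymbol{x}-\boldsymbol{y})\|_2$, and the submultiplicative bound $\|\partial\boldsymbol{f}(\boldsymbol{z})(\boldsymbol{x}-\boldsymbol{y})\|_2 \le \|\partial\boldsymbol{f}(\boldsymbol{z})\|_F\,\|\boldsymbol{x}-\boldsymbol{y}\|_2$, which follows since the operator (spectral) norm is dominated by the Frobenius norm. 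Combining these with $\phi(1)-\phi(0) = \|\boldsymbol{v}\|_2^2$ gives $\|\boldsymbol{v}\|_2^2 \le \|\boldsymbol{v}\|_2\,\|\partial\boldsymbol{f}(\boldsymbol{z})\|_F\,\|\boldsymbol{x}-\boldsymbol{y}\|_2$, and dividing by $\|\boldsymbol{v}\|_2 > 0$ leaves $\|\boldsymbol{f}(\boldsymbol{x})-\boldsymbol{f}(\boldsymbol{y})\|_2 \le \|\partial\boldsymbol{f}(\boldsymbol{z})\|_F\,\|\boldsymbol{x}-\boldsymbol{y}\|_2$. Finally I would pass from this pointwise bound to the stated supremum via $\|\partial\boldsymbol{f}(\boldsymbol{z})\|_F \le \sup_{\boldsymbol{z}' \in [\boldsymbol{x},\boldsymbol{y}]} \|\partial\boldsymbol{f}(\boldsymbol{z}')\|_F$, noting that continuity of the Jacobian and compactness of the segment guarantee this supremum is finite (indeed attained).

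The main obstacle is conceptual rather than computational: recognizing that a single-point vector equality is unavailable and engineering the scalar function $\phi$ whose derivative encodes the entire Jacobian while still obeying the one-dimensional mean value theorem. Once $\phi$ is chosen correctly, the remaining steps are the routine norm inequalities (Cauchy--Schwarz and $\|\cdot\|_{op} \le \|\cdot\|_F$) and the compactness remark, none of which pose difficulty. An alternative route via the fundamental theorem of calculus, writing $\boldsymbol{f}(\boldsymbol{x})-\boldsymbol{f}(\boldsymbol{y}) = \int_0^1 \partial\boldsymbol{f}(\boldsymbol{y}+t(\boldsymbol{x}-\boldsymbol{y}))(\boldsymbol{x}-\boldsymbol{y})\,dt$ and using $\|\int_0^1 \boldsymbol{g}'(t)\,dt\|_2 \le \int_0^1 \|\boldsymbol{g}'(t)\|_2\,dt$, would reach the same conclusion, trading the scalar-projection trick for a vector-valued integration bound.
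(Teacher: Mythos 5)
Your proof is correct. Note that the paper does not prove this statement at all---it is quoted in App.~D as a classical preliminary with citations to Apostol, Rudin, and Burke---and your argument is precisely the canonical proof from those references (e.g., Rudin's Theorem 5.19): project onto $\boldsymbol{v}=\boldsymbol{f}(\boldsymbol{x})-\boldsymbol{f}(\boldsymbol{y})$, apply the scalar mean value theorem to $\phi(t)=\langle \boldsymbol{v},\boldsymbol{f}(\boldsymbol{y}+t(\boldsymbol{x}-\boldsymbol{y}))\rangle$, then chain Cauchy--Schwarz with $\|\cdot\|_{2\to 2}\leq\|\cdot\|_F$; your pointwise bound at the witness $\boldsymbol{z}$ is in fact slightly stronger than the supremum form the paper states and uses.
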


\begin{definition}\label{def:contraction}
A mapping $T: X \rightarrow X$ is called a \emph{contraction} on X if there exists a constant $\epsilon \in [0, 1)$ such that for all $x, y \in X$,
    \begin{equation*}
        \|T(x) - T(y)\|_2 \leq \epsilon \|x - y\|_2.
    \end{equation*}   
\end{definition}

\begin{theorem}{(Banach's Fixed Point Theorem~\citep{banach1922operations})}\label{thm:banach_og}
    Let $T: X \rightarrow X$ be a contraction on $X$. Then $T$ has a unique fixed point $x^* \in X$ where $T(x^*) = x^*$.
\end{theorem}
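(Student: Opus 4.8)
The plan is to recognize the fixed-point map $T(\boldsymbol{\psi}) \triangleq \boldsymbol{\sigma}(\nabla_{\boldsymbol{\psi}}\Tilde{F}(\boldsymbol{\psi},\boldsymbol{\theta}))$ as a self-map of the cube $[0,1]^{|V|}$ and to verify the hypotheses of Banach's fixed-point theorem (Thm.~\ref{thm:banach_og}). First I would note that $[0,1]^{|V|}$, being a closed subset of $\mathbb{R}^{|V|}$ under the Euclidean metric, is a complete metric space, and that $T$ maps it into itself because every coordinate of $\boldsymbol{\sigma}$ is a sigmoid taking values in $(0,1)\subset[0,1]$. With these two routine facts in hand, the full conclusion of the theorem—existence, uniqueness of $\boldsymbol{\psi}^*$, and convergence of the iterates of Eq.~\eqref{eq:iterative} from any starting point—follows immediately once $T$ is shown to be a contraction in the sense of Def.~\ref{def:contraction}.

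The core of the argument is therefore to bound the Lipschitz constant of $T$ by a quantity strictly below $1$. I would invoke the multivariate mean value theorem (Thm.~\ref{thm:MMVT}), which reduces the task to bounding $\sup_{\boldsymbol{z}}\|\partial T(\boldsymbol{z})\|_F$. Differentiating through the sigmoid gives $\partial T(\boldsymbol{\psi}) = \Sigma'(\nabla_{\boldsymbol{\psi}}\Tilde{F})\cdot\nabla_{\boldsymbol{\psi}}^2\Tilde{F}$, which is precisely the matrix $A = I - \partial T$ appearing in Thm.~\ref{thm:implicit_result}, where $\Sigma'$ is diagonal with entries $\sigma'(\cdot)\in(0,\tfrac14]$. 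Because $\Sigma'$ is diagonal with spectral norm at most $\tfrac14$, a coordinatewise estimate yields $\|\partial T\|_F \le \tfrac14\,\|\nabla_{\boldsymbol{\psi}}^2\Tilde{F}\|_F$.

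It then remains to control the Hessian of the multilinear extension, which is exactly where Asm.~\ref{asm:bound} enters. Since $\Tilde{F}$ is multilinear in $\boldsymbol{\psi}$, every diagonal entry of $\nabla_{\boldsymbol{\psi}}^2\Tilde{F}$ vanishes, and applying the discrete-derivative identity of Cor.~\ref{cor:grad} twice expresses each off-diagonal entry as the alternating sum $\Tilde{F}([\boldsymbol{\psi}]_{+i,+j}) - \Tilde{F}([\boldsymbol{\psi}]_{+i,-j}) - \Tilde{F}([\boldsymbol{\psi}]_{-i,+j}) + \Tilde{F}([\boldsymbol{\psi}]_{-i,-j})$, each term evaluated at a point of $[0,1]^{|V|}$. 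The bound of Eq.~\eqref{eq:cond}, $\sup|\Tilde{F}|<1/|V|$, then forces $|(\nabla_{\boldsymbol{\psi}}^2\Tilde{F})_{ij}| < 4/|V|$ for $i\neq j$, and summing the $|V|(|V|-1)$ off-diagonal entries gives $\|\nabla_{\boldsymbol{\psi}}^2\Tilde{F}\|_F < (4/|V|)\sqrt{|V|(|V|-1)}$. Combining with the factor $\tfrac14$ produces a contraction constant $\omega \le \tfrac{1}{|V|}\sqrt{|V|(|V|-1)} = \sqrt{1 - 1/|V|} < 1$, at which point Banach's theorem closes the argument.

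The main obstacle I anticipate is this Hessian bound. One must correctly observe that the diagonal is zero, and then see that the inverse-in-$|V|$ scaling imposed by Asm.~\ref{asm:bound} is exactly calibrated to cancel the $\sqrt{|V|(|V|-1)}$ growth arising from summing $O(|V|^2)$ off-diagonal terms, leaving enough slack to absorb the $\tfrac14$ sigmoid factor. Getting these constants right—rather than the otherwise mechanical completeness, self-map, and MMVT steps—is what makes the precise form of the assumption necessary, and it is the place where I would take the greatest care.
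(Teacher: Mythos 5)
There is a genuine gap, and it is one of target identification: you have not proved the statement in question at all. The quoted theorem \emph{is} Banach's fixed-point theorem itself --- a general assertion about an arbitrary contraction $T$ on a (complete) metric space $X$, which the paper does not prove but simply cites to \citet{banach1922operations} as a technical preliminary in App.~\ref{app:prelim}. Your proposal instead verifies the \emph{hypotheses} of that theorem for the specific map $T_{\boldsymbol{\theta}}(\boldsymbol{\psi}) = \boldsymbol{\sigma}(\nabla_{\boldsymbol{\psi}}\Tilde{F}(\boldsymbol{\psi},\boldsymbol{\theta}))$ under Asm.~\ref{asm:bound}; that is, you have reconstructed the paper's proof of Thm.~\ref{thm:banach} from App.~\ref{app:convergence} (and done so competently --- your exploitation of the vanishing Hessian diagonal even sharpens the paper's Frobenius bound from $|V|\sup|\Tilde{F}| < 1$ to $\sqrt{1-1/|V|}$). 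But as a proof of Thm.~\ref{thm:banach_og} the argument is circular: your closing step, ``at which point Banach's theorem closes the argument,'' invokes as a black box precisely the statement you were asked to establish.

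A genuine proof of the stated theorem requires the Picard iteration argument and makes no reference to $\Tilde{F}$, sigmoids, or Asm.~\ref{asm:bound}: fix any $x_0 \in X$, set $x_{n+1} = T(x_n)$, and use the contraction property of Def.~\ref{def:contraction} to get $\|x_{n+1} - x_n\|_2 \le \epsilon^n \|x_1 - x_0\|_2$; the geometric series then shows $(x_n)$ is Cauchy, completeness of $X$ yields a limit $x^*$, continuity of $T$ (automatic, since $T$ is $\epsilon$-Lipschitz) gives $T(x^*) = \lim T(x_n) = \lim x_{n+1} = x^*$, and uniqueness follows because two fixed points $x^*, y^*$ would satisfy $\|x^* - y^*\|_2 = \|T(x^*) - T(y^*)\|_2 \le \epsilon\|x^* - y^*\|_2$ with $\epsilon < 1$, forcing $x^* = y^*$. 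Note also that completeness of $X$ is an essential hypothesis that the paper's statement of Thm.~\ref{thm:banach_og} omits (without it the theorem is false, e.g.\ $T(x) = x/2$ on $X = (0,1]$); your observation that $[0,1]^{|V|}$ is closed in $\mathbb{R}^{|V|}$ supplies exactly this ingredient in the paper's application, but in the general theorem it must appear as an assumption rather than a verification.
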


\begin{theorem} \label{thm:implicit}
    (Implicit Function Theorem \citep{krantz2002implicit, blondel2022efficient}) Given a continuously differentiable function $G: \mathbb{R}^n \times \mathbb{R}^d \rightarrow \mathbb{R}^n$, an implicitly defined function $\boldsymbol{x}^*: \mathbb{R}^d \rightarrow \mathbb{R}^n$ of $\boldsymbol{\theta} \in \mathbb{R}^d$, and an optimal solution $\boldsymbol{x}^*(\boldsymbol{\theta})$; let
    \begin{equation}
        G(\boldsymbol{x}^*(\boldsymbol{\theta}), \boldsymbol{\theta}) = 0.
    \end{equation}
    For $(\boldsymbol{x}_0, \boldsymbol{\theta}_0)$ satisfying $G(\boldsymbol{x}_0, \boldsymbol{\theta}_0) = 0$, if the Jacobian $\partial_{\boldsymbol{x}} G$ evaluated at $(\boldsymbol{x}_0, \boldsymbol{\theta}_0)$ is a square invertible matrix, then there exists a function $\boldsymbol{x}^*(\cdot)$ defined on a neighborhood of $\boldsymbol{\theta}_0$ such that 
    $\boldsymbol{x}^*(\boldsymbol{\theta}_0) = x_0$. Furthermore, for all $\boldsymbol{\theta}$ in this neighborhood, we have that $G(x^*(\boldsymbol{\theta}), \boldsymbol{\theta}) = 0$ and $\partial x^*(\boldsymbol{\theta})$ exists. According to the chain rule, the Jacobian $\partial x^*(\boldsymbol{\theta})$ satisfies
    \begin{equation}
        \partial_{\boldsymbol{x}} G(\boldsymbol{x}^*(\boldsymbol{\theta}), \boldsymbol{\theta}) \partial \boldsymbol{x}^*(\boldsymbol{\theta}) + \partial_{\boldsymbol{\theta}} G(\boldsymbol{x}^*(\boldsymbol{\theta}), \boldsymbol{\theta}) = 0.
    \end{equation}
    Therefore, computing $\partial \boldsymbol{x}^*(\boldsymbol{\theta})$ becomes the equivalent of solving the following linear system of equations
    \begin{equation}
        \underbrace{-\partial_{\boldsymbol{x}} G(\boldsymbol{x}^*(\boldsymbol{\theta}), \boldsymbol{\theta})}_{A \in \mathbb{R}^{n \times n}} \underbrace{\partial \boldsymbol{x}^*(\boldsymbol{\theta})}_{J \in \mathbb{R}^{n \times d}} = \underbrace{\partial_{\boldsymbol{\theta}} G(\boldsymbol{x}^*(\boldsymbol{\theta}), \boldsymbol{\theta})}_{B \in \mathbb{R}^{n \times d}}.
    \end{equation}  
\end{theorem}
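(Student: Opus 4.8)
The plan is to reduce the problem of solving $G(\boldsymbol{x}, \boldsymbol{\theta}) = 0$ for $\boldsymbol{x}$ to a parametrized fixed-point problem, so that Banach's fixed-point theorem (Thm.~\ref{thm:banach_og}) supplies both the existence and the local uniqueness of $\boldsymbol{x}^*(\boldsymbol{\theta})$. Write $A_0 \triangleq \partial_{\boldsymbol{x}} G(\boldsymbol{x}_0, \boldsymbol{\theta}_0)$, which is invertible by hypothesis, and for each fixed $\boldsymbol{\theta}$ near $\boldsymbol{\theta}_0$ define the map $\Phi_{\boldsymbol{\theta}}(\boldsymbol{x}) \triangleq \boldsymbol{x} - A_0^{-1} G(\boldsymbol{x}, \boldsymbol{\theta})$. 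Because $A_0^{-1}$ is invertible, a point $\boldsymbol{x}$ is a fixed point of $\Phi_{\boldsymbol{\theta}}$ if and only if $G(\boldsymbol{x}, \boldsymbol{\theta}) = 0$, so it suffices to show that $\Phi_{\boldsymbol{\theta}}$ has a unique fixed point in a suitable closed ball around $\boldsymbol{x}_0$.

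First I would establish that $\Phi_{\boldsymbol{\theta}}$ is a contraction in $\boldsymbol{x}$ on a small neighborhood. Differentiating gives $\partial_{\boldsymbol{x}} \Phi_{\boldsymbol{\theta}}(\boldsymbol{x}) = I - A_0^{-1} \partial_{\boldsymbol{x}} G(\boldsymbol{x}, \boldsymbol{\theta})$, which vanishes at $(\boldsymbol{x}_0, \boldsymbol{\theta}_0)$ since $A_0^{-1} A_0 = I$. As $G$ is continuously differentiable, $\partial_{\boldsymbol{x}} G$ is continuous, so there exist radii $r, \rho > 0$ with $\|\partial_{\boldsymbol{x}} \Phi_{\boldsymbol{\theta}}(\boldsymbol{x})\|_F \leq \tfrac{1}{2}$ whenever $\|\boldsymbol{x} - \boldsymbol{x}_0\|_2 \leq r$ and $\|\boldsymbol{\theta} - \boldsymbol{\theta}_0\|_2 \leq \rho$. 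By the multivariate mean value inequality (Thm.~\ref{thm:MMVT}), this bound makes $\Phi_{\boldsymbol{\theta}}$ a contraction with constant $\tfrac{1}{2}$ in the sense of Def.~\ref{def:contraction}; shrinking $\rho$ if needed so that $\Phi_{\boldsymbol{\theta}}$ maps the closed ball $\bar{B}(\boldsymbol{x}_0, r)$ into itself (using $\Phi_{\boldsymbol{\theta}_0}(\boldsymbol{x}_0) = \boldsymbol{x}_0$ and continuity in $\boldsymbol{\theta}$), Thm.~\ref{thm:banach_og} yields a unique fixed point $\boldsymbol{x}^*(\boldsymbol{\theta}) \in \bar{B}(\boldsymbol{x}_0, r)$ for each such $\boldsymbol{\theta}$, with $\boldsymbol{x}^*(\boldsymbol{\theta}_0) = \boldsymbol{x}_0$ and $G(\boldsymbol{x}^*(\boldsymbol{\theta}), \boldsymbol{\theta}) = 0$. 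This defines $\boldsymbol{x}^*(\cdot)$ on the neighborhood $\|\boldsymbol{\theta} - \boldsymbol{\theta}_0\|_2 \leq \rho$.

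Next I would show $\boldsymbol{x}^*$ is continuous and then differentiable. Continuity follows from the uniform contraction estimate: subtracting the fixed-point identities for $\boldsymbol{\theta}, \boldsymbol{\theta}'$ and using the contraction bound gives $\|\boldsymbol{x}^*(\boldsymbol{\theta}) - \boldsymbol{x}^*(\boldsymbol{\theta}')\|_2 \leq 2\,\|A_0^{-1}\|_2\, \|G(\boldsymbol{x}^*(\boldsymbol{\theta}'), \boldsymbol{\theta}) - G(\boldsymbol{x}^*(\boldsymbol{\theta}'), \boldsymbol{\theta}')\|_2$, which tends to $0$ as $\boldsymbol{\theta} \to \boldsymbol{\theta}'$. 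For differentiability I would note that, by continuity of $\partial_{\boldsymbol{x}} G$ and invertibility of $A_0$, the Jacobian $\partial_{\boldsymbol{x}} G(\boldsymbol{x}^*(\boldsymbol{\theta}), \boldsymbol{\theta})$ stays invertible after shrinking the neighborhood, so the candidate derivative $J(\boldsymbol{\theta}) \triangleq -[\partial_{\boldsymbol{x}} G(\boldsymbol{x}^*(\boldsymbol{\theta}), \boldsymbol{\theta})]^{-1} \partial_{\boldsymbol{\theta}} G(\boldsymbol{x}^*(\boldsymbol{\theta}), \boldsymbol{\theta})$ is well defined; one then verifies that this is the actual derivative by expanding the identity $G(\boldsymbol{x}^*(\boldsymbol{\theta} + \boldsymbol{h}), \boldsymbol{\theta} + \boldsymbol{h}) = 0$ to first order and controlling the remainder using the established continuity of $\boldsymbol{x}^*$. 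Once differentiability is known, applying the chain rule to $G(\boldsymbol{x}^*(\boldsymbol{\theta}), \boldsymbol{\theta}) = 0$ immediately gives $\partial_{\boldsymbol{x}} G \, \partial \boldsymbol{x}^* + \partial_{\boldsymbol{\theta}} G = 0$, i.e., the linear system $A J = B$ with $A = -\partial_{\boldsymbol{x}} G$ and $B = \partial_{\boldsymbol{\theta}} G$, as claimed.

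The hard part will be the differentiability step: existence, uniqueness, and continuity of $\boldsymbol{x}^*$ follow fairly directly from the contraction/Banach machinery already assembled in App.~\ref{app:prelim}, but showing that the natural candidate $J(\boldsymbol{\theta})$ is genuinely the Fr\'echet derivative requires a careful remainder estimate rather than a purely symbolic chain-rule computation. Since this is a classical result, an acceptable alternative is to cite \citet{krantz2002implicit} for the existence and differentiability of $\boldsymbol{x}^*(\cdot)$ and present only the chain-rule derivation of the Jacobian formula, which is the part actually invoked in Thm.~\ref{thm:implicit_result}.
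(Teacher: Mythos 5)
Your proposal is correct, but it takes a different route from the paper in the trivial sense that the paper does not prove this statement at all: Thm.~\ref{thm:implicit} is presented in the technical preliminaries as a classical result, cited to \citet{krantz2002implicit} and \citet{blondel2022efficient}, and only the final chain-rule identity $\partial_{\boldsymbol{x}} G\,\partial\boldsymbol{x}^* + \partial_{\boldsymbol{\theta}} G = 0$ is actually used downstream (in Thm.~\ref{thm:implicit_result}). What you supply is the standard Newton-iteration proof of the implicit function theorem: the map $\Phi_{\boldsymbol{\theta}}(\boldsymbol{x}) = \boldsymbol{x} - A_0^{-1}G(\boldsymbol{x},\boldsymbol{\theta})$ has Jacobian vanishing at $(\boldsymbol{x}_0,\boldsymbol{\theta}_0)$, so continuity of $\partial_{\boldsymbol{x}}G$ plus the mean value inequality (Thm.~\ref{thm:MMVT}) makes it a contraction on a closed ball that it maps into itself for $\boldsymbol{\theta}$ near $\boldsymbol{\theta}_0$, and Banach's theorem (Thm.~\ref{thm:banach_og}) delivers existence and local uniqueness of $\boldsymbol{x}^*(\boldsymbol{\theta})$. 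This has the pleasant feature of reusing exactly the machinery the appendix already assembles for Thm.~\ref{thm:banach}. Your continuity estimate is right, and you correctly flag that the genuinely nontrivial step is verifying that the candidate $J(\boldsymbol{\theta}) = -[\partial_{\boldsymbol{x}}G]^{-1}\partial_{\boldsymbol{\theta}}G$ is the actual derivative via a remainder estimate; note that the local Lipschitz bound on $\boldsymbol{x}^*$ implied by your continuity inequality (since $G$ is $C^1$, hence locally Lipschitz in $\boldsymbol{\theta}$) is precisely what makes that remainder estimate close. Your fallback — cite the classical result for existence and differentiability and present only the chain-rule derivation of the linear system — is exactly what the paper does.
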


\section{Proof of Theorem~\ref{thm:banach}}
\label{app:convergence}
Before stating our proof we need to state the following corollary:
\begin{corollary}\label{cor:hessian}
    Knowing Corollary~\ref{cor:grad}, we can write the \emph{Hessian} of the multilinear relaxation as
    \begin{equation}\label{eq:hessian}
        \begin{split}
            \frac{\partial^2 \Tilde{F}(\boldsymbol{\psi}, \theta)} {\partial{\psi}_i \partial{\psi}_j} &= \left(\Tilde{F}([\boldsymbol{\psi}]_{+i, +j}, \boldsymbol{\theta})] - \Tilde{F}([\boldsymbol{\psi}]_{-i, +j}, \boldsymbol{\theta})]\right) - \left(\Tilde{F}([\boldsymbol{\psi}]_{+i, -j}, \boldsymbol{\theta})] - \Tilde{F}([\boldsymbol{\psi}]_{-i, -j}, \boldsymbol{\theta})]\right),\\
            &= \Tilde{F}([\boldsymbol{\psi}]_{+i, +j}, \boldsymbol{\theta})] - \Tilde{F}([\boldsymbol{\psi}]_{-i, +j}, \boldsymbol{\theta}) - \Tilde{F} ([\boldsymbol{\psi}]_{+i, -j}, \boldsymbol{\theta}) + \Tilde{F}([\boldsymbol{\psi}]_{-i, -j}, \boldsymbol{\theta}),
        \end{split}
    \end{equation}
    if $i \neq j$, otherwise $\frac{\partial^2 \Tilde{F}(\boldsymbol{\psi}, \theta)} {\partial{\boldsymbol{\psi}}_i \partial{\boldsymbol{\psi}}_j} = 0$.
\end{corollary}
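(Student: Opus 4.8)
The plan is to derive Eq.~\eqref{eq:hessian} by applying the first-order gradient identity of Corollary~\ref{cor:grad} twice, the key structural fact being that the multilinear relaxation $\Tilde{F}(\boldsymbol{\psi}, \boldsymbol{\theta})$ is affine (degree one) in each coordinate $\psi_i$ separately. Because of this, any coordinate-fixing operation $[\,\cdot\,]_{\pm i}$ removes all dependence on $\psi_i$, and differentiation in a different coordinate commutes with fixing a given one. I would split the argument into the diagonal case $i=j$ and the off-diagonal case $i \neq j$.

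For the diagonal case, I would start from Corollary~\ref{cor:grad}, which gives $\partial_{\psi_i}\Tilde{F}(\boldsymbol{\psi},\boldsymbol{\theta}) = \Tilde{F}([\boldsymbol{\psi}]_{+i},\boldsymbol{\theta}) - \Tilde{F}([\boldsymbol{\psi}]_{-i},\boldsymbol{\theta})$. Both terms on the right-hand side have the $i$-th coordinate fixed to $1$ and $0$, respectively, so neither depends on $\psi_i$ any longer. Differentiating once more with respect to $\psi_i$ therefore yields $\partial^2_{\psi_i}\Tilde{F}=0$, which is exactly the claimed vanishing of the diagonal Hessian entries.

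For the off-diagonal case, I would again invoke Corollary~\ref{cor:grad}, now with respect to $\psi_j$, to write $\partial_{\psi_j}\Tilde{F}(\boldsymbol{\psi},\boldsymbol{\theta}) = \Tilde{F}([\boldsymbol{\psi}]_{+j},\boldsymbol{\theta}) - \Tilde{F}([\boldsymbol{\psi}]_{-j},\boldsymbol{\theta})$. Since $i \neq j$, each summand is still a multilinear function of the remaining coordinates, in particular of $\psi_i$, so Corollary~\ref{cor:grad} applies a second time to each term, differentiating in $\psi_i$. This produces $\Tilde{F}([\boldsymbol{\psi}]_{+i,+j},\boldsymbol{\theta}) - \Tilde{F}([\boldsymbol{\psi}]_{-i,+j},\boldsymbol{\theta})$ from the first term and $\Tilde{F}([\boldsymbol{\psi}]_{+i,-j},\boldsymbol{\theta}) - \Tilde{F}([\boldsymbol{\psi}]_{-i,-j},\boldsymbol{\theta})$ from the second; subtracting gives the four-term expression in Eq.~\eqref{eq:hessian}.

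The only point requiring care, and thus the main (if mild) obstacle, is justifying that differentiation in $\psi_i$ commutes with the coordinate-fixing operations $[\,\cdot\,]_{\pm j}$ when $i \neq j$, so that Corollary~\ref{cor:grad} may be reused on the restricted maps. This is legitimate precisely because fixing $\psi_j$ to a constant leaves the affine dependence on $\psi_i$ intact, keeping each restricted map multilinear. As an alternative that sidesteps this bookkeeping entirely, one could differentiate the explicit product-sum of Eq.~\eqref{eq:mult_ext} twice, mirroring the proof of Lemma~\ref{lem:multigradset}; there the monomials are manifestly linear in every $\psi_i$, which makes both the vanishing of the diagonal terms and the four-term off-diagonal formula immediate.
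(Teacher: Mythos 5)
Your proposal is correct and follows essentially the same route as the paper: the paper's Corollary is obtained precisely by iterating the first-order identity of Corollary~\ref{cor:grad} (itself proved from the product-sum form in Lemma~\ref{lem:multigradset}), with the diagonal entries vanishing because $\Tilde{F}$ is affine in each coordinate. Your added care about why differentiation in $\psi_i$ commutes with fixing $\psi_j$ for $i\neq j$ is a valid and slightly more explicit justification of a step the paper leaves implicit.
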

We proof the following lemma using this corollary:
\begin{lemma}\label{lem:hess_bound}
    Elements of the Hessian given in Eq.~\eqref{eq:hessian} are bounded with $4 \sup_{\boldsymbol{\psi} \in [\boldsymbol{0}, \boldsymbol{1}]} \left|\Tilde{F}\left(\boldsymbol{\psi}, \boldsymbol{\theta}\right) \right|$, i.e., $\sup_{\psi_i, \psi_j \in [0, 1]} \left|\frac{\partial^2 \Tilde{F}\left(\boldsymbol{\psi}, \boldsymbol{\theta}\right)}{\partial_{\psi_i}\partial_{\psi_j}}\right| \leq 4 \sup_{\boldsymbol{\psi} \in [\boldsymbol{0}, \boldsymbol{1}]} \left|\Tilde{F}\left(\boldsymbol{\psi}, \boldsymbol{\theta}\right) \right|$.
\end{lemma}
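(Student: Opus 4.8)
The plan is to bound each Hessian entry directly from the explicit finite-difference expression in Corollary~\ref{cor:hessian}, rather than attempting to differentiate $\Tilde{F}$ analytically. The argument splits into the diagonal and off-diagonal cases of $\frac{\partial^2 \Tilde{F}(\boldsymbol{\psi},\boldsymbol{\theta})}{\partial \psi_i \partial \psi_j}$, and in both cases the desired inequality follows with essentially no computation.

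First I would dispose of the diagonal case $i=j$. Since $\Tilde{F}$ is multilinear, i.e. affine in each individual coordinate $\psi_i$ when the remaining coordinates are held fixed (this is immediate from Eq.~\eqref{eq:mult_ext}, where each $\psi_i$ appears to the first power), the second derivative $\partial^2 \Tilde{F}/\partial \psi_i^2$ vanishes, consistent with the $i=j$ clause of Corollary~\ref{cor:hessian}. The bound then holds trivially, because $0 \le 4\,\sup_{\boldsymbol{\psi}\in[\boldsymbol{0},\boldsymbol{1}]}|\Tilde{F}(\boldsymbol{\psi},\boldsymbol{\theta})|$ and the right-hand side is nonnegative.

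For the off-diagonal entries $i\neq j$, I would start from the four-term expression in Eq.~\eqref{eq:hessian}, namely the signed sum of $\Tilde{F}([\boldsymbol{\psi}]_{+i,+j},\boldsymbol{\theta})$, $\Tilde{F}([\boldsymbol{\psi}]_{-i,+j},\boldsymbol{\theta})$, $\Tilde{F}([\boldsymbol{\psi}]_{+i,-j},\boldsymbol{\theta})$, and $\Tilde{F}([\boldsymbol{\psi}]_{-i,-j},\boldsymbol{\theta})$. The key observation is that each of these four arguments is obtained from $\boldsymbol{\psi}\in[0,1]^{|V|}$ merely by clamping the $i$-th and $j$-th coordinates to either $0$ or $1$; hence each argument still lies inside the hypercube $[0,1]^{|V|}$. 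Consequently every one of the four terms is at most $\sup_{\boldsymbol{\psi}\in[\boldsymbol{0},\boldsymbol{1}]}|\Tilde{F}(\boldsymbol{\psi},\boldsymbol{\theta})|$ in absolute value. Applying the triangle inequality to the signed sum of these four terms then yields
\begin{equation*}
    \left|\frac{\partial^2 \Tilde{F}(\boldsymbol{\psi},\boldsymbol{\theta})}{\partial \psi_i \partial \psi_j}\right| \le 4\,\sup_{\boldsymbol{\psi}\in[\boldsymbol{0},\boldsymbol{1}]}\left|\Tilde{F}(\boldsymbol{\psi},\boldsymbol{\theta})\right|,
\end{equation*}
and taking the supremum over $\psi_i,\psi_j\in[0,1]$ on the left gives the claim.

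The only subtle point, and the step I would be most careful to state explicitly, is that clamping two coordinates to $0$ or $1$ keeps the evaluation points within the domain $[0,1]^{|V|}$ over which the supremum is taken; this is what allows each of the four terms to be controlled by the same uniform bound $\sup_{\boldsymbol{\psi}\in[\boldsymbol{0},\boldsymbol{1}]}|\Tilde{F}|$. There is no delicate estimate required beyond the triangle inequality, so I do not expect a genuine obstacle here; the factor of $4$ is simply the number of corner-evaluations in the mixed finite difference.
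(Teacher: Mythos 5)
Your argument is correct and matches the paper's proof: both start from the four-term finite-difference expression in Corollary~\ref{cor:hessian}, apply the triangle inequality, and bound each term by $\sup_{\boldsymbol{\psi}\in[\boldsymbol{0},\boldsymbol{1}]}|\Tilde{F}(\boldsymbol{\psi},\boldsymbol{\theta})|$ since the clamped evaluation points remain in the hypercube. Your explicit treatment of the diagonal case is a minor addition the paper leaves implicit.
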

\begin{proof}
By Corollary~\ref{cor:hessian}, we have
\begin{equation*}
    \begin{split}
        \left|\frac{\partial^2 \Tilde{F}(\boldsymbol{\psi}, \theta)} {\partial{\psi}_i \partial{\psi}_j}\right| &= \left|\left(\Tilde{F}([\boldsymbol{\psi}]_{+i, +j}, \boldsymbol{\theta})] - \Tilde{F}([\boldsymbol{\psi}]_{-i, +j}, \boldsymbol{\theta})]\right) - \left(\Tilde{F}([\boldsymbol{\psi}]_{+i, -j}, \boldsymbol{\theta})] - \Tilde{F}([\boldsymbol{\psi}]_{-i, -j}, \boldsymbol{\theta})]\right)\right|.\\
    \end{split}
\end{equation*}
Using the triangular inequality twice, we get
   \begin{equation*}
    \begin{split}
        \left|\frac{\partial^2 \Tilde{F}(\boldsymbol{\psi}, \theta)} {\partial{\psi}_i \partial{\psi}_j}\right| &\leq \left|\Tilde{F}([\boldsymbol{\psi}]_{+i, +j}, \boldsymbol{\theta})] - \Tilde{F}([\boldsymbol{\psi}]_{-i, +j}, \boldsymbol{\theta})]\right| + \left|\Tilde{F}([\boldsymbol{\psi}]_{+i, -j}, \boldsymbol{\theta})] - \Tilde{F}([\boldsymbol{\psi}]_{-i, -j}, \boldsymbol{\theta})]\right|,\\
        &\leq \left|\Tilde{F}([\boldsymbol{\psi}]_{+i, +j}, \boldsymbol{\theta})]\right| + \left|\Tilde{F}([\boldsymbol{\psi}]_{-i, +j}, \boldsymbol{\theta})]\right| + \left|\Tilde{F}([\boldsymbol{\psi}]_{+i, -j}, \boldsymbol{\theta})]\right| + \left|\Tilde{F}([\boldsymbol{\psi}]_{-i, -j}, \boldsymbol{\theta})]\right|,\\
        &\leq 4 \sup_{\boldsymbol{\psi} \in [\boldsymbol{0}, \boldsymbol{1}]} \left|\Tilde{F}\left(\boldsymbol{\psi}, \boldsymbol{\theta}\right) \right|.
    \end{split}
    \end{equation*} 
\end{proof}
Equipped with this lemma, we are ready to proof Thm.~\ref{thm:banach}:
\begin{proof}
    
    For simplicity, define a mapping $T_{\boldsymbol{\theta}}: [0, 1]^{|V|} \rightarrow [0, 1]^{|V|}$ where $T_{\boldsymbol{\theta}}(\boldsymbol{\psi}) = \boldsymbol{\sigma} (\nabla_{\boldsymbol{\psi}} \Tilde{F}\left(\boldsymbol{\psi}, \boldsymbol{\theta}\right))$
    . 
    Given $\Tilde{F}$ is a polynomial w.r.t. $\boldsymbol{\psi}$ and the sigmoid is a smooth function, $T_{\boldsymbol{\theta}}$ is continuously differentiable w.r.t. $\boldsymbol{\psi}$ in $[0, 1]^{|V|}$. By the multivariate equivalent of the mean-value theorem (see Thm.~\ref{thm:MMVT} in App.~\ref{app:prelim}), for every $\boldsymbol{x}, \boldsymbol{y} \in |0, 1|^{|V|}$, there exists a $\boldsymbol{\psi} \in |0, 1|^{|V|}$, such that
    \begin{equation}\label{eq:mmvt_result}
        \begin{split}
            \|T_{\boldsymbol{\theta}}(\boldsymbol{x}) - T_{\boldsymbol{\theta}}(\boldsymbol{y})\|_2 &\leq \sup_{\boldsymbol{\psi} \in [\boldsymbol{0}, \boldsymbol{1}]} \|\partial T_{\boldsymbol{\theta}}(\boldsymbol{\psi})\|_F \|\boldsymbol{x} - \boldsymbol{y}\|_2,\\
            \|\boldsymbol{\sigma} (\nabla_{\boldsymbol{\psi}} \Tilde{F}\left(\boldsymbol{x}, \boldsymbol{\theta}\right)) - \boldsymbol{\sigma} (\nabla_{\boldsymbol{\psi}} \Tilde{F}\left(\boldsymbol{y}, \boldsymbol{\theta}\right))\|_2 &\leq \sup_{\boldsymbol{\psi} \in [\boldsymbol{0}, \boldsymbol{1}]} \|\partial_{\boldsymbol{\psi}} \boldsymbol{\sigma} (\nabla_{\boldsymbol{\psi}} \Tilde{F}\left(\boldsymbol{\psi}, \boldsymbol{\theta}\right))\|_F \|\boldsymbol{x} - \boldsymbol{y}\|_2.
        \end{split}        
    \end{equation}
    From Eq.~\eqref{eq:A}, we know that $\partial_{\boldsymbol{\psi}} \boldsymbol{\sigma} (\nabla_{\boldsymbol{\psi}} \Tilde{F}\left(\boldsymbol{\psi}, \boldsymbol{\theta}\right)) = \begin{bmatrix} \sigma' \left(\frac{\partial \Tilde{F}\left(\boldsymbol{\psi}, \boldsymbol{\theta}\right)}{\partial_{\psi_i}}\right)\frac{\partial^2 \Tilde{F}\left(\boldsymbol{\psi}, \boldsymbol{\theta}\right)}{\partial_{\psi_i}\partial_{\psi_j}} \end{bmatrix}_{1 \leq i, j \leq |V|}$ where $\sigma'(x) = (1 + \exp{(-x)})^{-2} \cdot \exp{(-x)}$. Then,
    
    \begin{equation*}
    \begin{split}
        \sup_{\boldsymbol{\psi} \in [\boldsymbol{0}, \boldsymbol{1}]} \|\partial_{\boldsymbol{\psi}} \boldsymbol{\sigma} (\nabla_{\boldsymbol{\psi}} \Tilde{F}\left(\boldsymbol{\psi}, \boldsymbol{\theta}\right))\|_F &= \sup_{\boldsymbol{\psi} \in [\boldsymbol{0}, \boldsymbol{1}]} \sqrt{\sum_i^{|V|} \sum_j^{|V|} \left|\sigma' \left(\frac{\partial \Tilde{F}\left(\boldsymbol{\psi}, \boldsymbol{\theta}\right)}{\partial_{\psi_i}}\right)\frac{\partial^2 \Tilde{F}\left(\boldsymbol{\psi}, \boldsymbol{\theta}\right)}{\partial_{\psi_i}\partial_{\psi_j}}\right|^2},\\
        &\leq \sqrt{\sum_i^{|V|} \sum_j^{|V|} \left(\sup_{\psi_i, \psi_j \in [0, 1]} \left|\sigma' \left(\frac{\partial \Tilde{F}\left(\boldsymbol{\psi}, \boldsymbol{\theta}\right)}{\partial_{\psi_i}}\right)\frac{\partial^2 \Tilde{F}\left(\boldsymbol{\psi}, \boldsymbol{\theta}\right)}{\partial_{\psi_i}\partial_{\psi_j}}\right|\right)^2},\\
        &= |V| \sup_{\psi_i, \psi_j \in [0, 1]} \left| \sigma' \left(\frac{\partial \Tilde{F}\left(\boldsymbol{\psi}, \boldsymbol{\theta}\right)}{\partial_{\psi_i}}\right)\frac{\partial^2 \Tilde{F}\left(\boldsymbol{\psi}, \boldsymbol{\theta}\right)}{\partial_{\psi_i}\partial_{\psi_j}} \right|,\\
        &\leq |V| \sup_{\psi_i \in [0, 1]} \left| \sigma' \left(\frac{\partial \Tilde{F}\left(\boldsymbol{\psi}, \boldsymbol{\theta}\right)}{\partial_{\psi_i}}\right)\right| \sup_{\psi_i, \psi_j \in [0, 1]} \left|\frac{\partial^2 \Tilde{F}\left(\boldsymbol{\psi}, \boldsymbol{\theta}\right)}{\partial_{\psi_i}\partial_{\psi_j}}\right|.\\
    \end{split}
\end{equation*}
Since $\argmax_{x \in \mathbb{R}} \sigma'(x) = 0$, we know that $\sup_{\psi_i \in [0, 1]} \left| \sigma' \left(\frac{\partial \Tilde{F}\left(\boldsymbol{\psi}, \boldsymbol{\theta}\right)}{\partial_{\psi_i}}\right)\right| \leq \frac{1}{4}$. Moreover, 
Lemma~\ref{lem:hess_bound} gives us a bound for the elements of the Hessian matrix where $\sup_{\psi_i, \psi_j \in [0, 1]} \left|\frac{\partial^2 \Tilde{F}\left(\boldsymbol{\psi}, \boldsymbol{\theta}\right)}{\partial_{\psi_i}\partial_{\psi_j}}\right| \leq 4 \sup_{\boldsymbol{\psi} \in [\boldsymbol{0}, \boldsymbol{1}]} \left|\Tilde{F}\left(\boldsymbol{\psi}, \boldsymbol{\theta}\right) \right|$. As a result,
\begin{equation}\label{eq:before_asm}
    \sup_{\boldsymbol{\psi} \in [\boldsymbol{0}, \boldsymbol{1}]} \|\partial_{\boldsymbol{\psi}} \boldsymbol{\sigma} (\nabla_{\boldsymbol{\psi}} \Tilde{F}\left(\boldsymbol{\psi}, \boldsymbol{\theta}\right))\|_F \leq |V| \sup_{\boldsymbol{\psi} \in [\boldsymbol{0}, \boldsymbol{1}]} \left|\Tilde{F}\left(\boldsymbol{\psi}, \boldsymbol{\theta}\right) \right|.
\end{equation}
According to Asm.~\ref{asm:bound}, we know that $\sup_{\boldsymbol{\psi} \in [\boldsymbol{0}, \boldsymbol{1}]} \left|\Tilde{F}\left(\boldsymbol{\psi}, \boldsymbol{\theta}\right) \right| < \frac{1}{|V|}$. Therefore, 
\begin{equation}
    \sup_{\boldsymbol{\psi} \in [\boldsymbol{0}, \boldsymbol{1}]} \|\partial_{\boldsymbol{\psi}} \boldsymbol{\sigma} (\nabla_{\boldsymbol{\psi}} \Tilde{F}\left(\boldsymbol{\psi}, \boldsymbol{\theta}\right))\|_F < 1.
\end{equation}
Plugging this in Eq.~\eqref{eq:mmvt_result} above, we get
\begin{equation}
        \begin{split}
        \|\boldsymbol{\sigma} (\nabla_{\boldsymbol{\psi}} \Tilde{F}\left(\boldsymbol{x}, \boldsymbol{\theta}\right)) - \boldsymbol{\sigma} (\nabla_{\boldsymbol{\psi}} \Tilde{F}\left(\boldsymbol{y}, \boldsymbol{\theta}\right))\|_2 &< \|\boldsymbol{x} - \boldsymbol{y}\|_2
        .    
    \end{split}        
\end{equation}
This means 
Eq.~\eqref{eq:fixed_point} is a contraction on $[0, 1]^{|V|}$ (see Definition~\ref{def:contraction} in App.~\ref{app:prelim}). Thus, according to Banach fixed-point theorem (see Thm.~\ref{thm:banach_og} in App.~\ref{app:prelim}) the iterations given in Eq.~\eqref{eq:iterative} are bound to converge to a unique solution.
\end{proof}

\section{Proof of Theorem~\ref{thm:implicit_result}
}\label{app:AandB}
\begin{proof}
For $n=|V|$,
\begin{equation}\label{eq:A}
    \begin{split}
        A &= I - \partial_{\boldsymbol{\psi}} \boldsymbol{\sigma} (\nabla_{\boldsymbol{\psi}} \Tilde{F}\left(\boldsymbol{\psi}, \boldsymbol{\theta}\right)) = I - \begin{bmatrix} \frac{\partial \boldsymbol{\sigma} (\nabla_{\boldsymbol{\psi}} \Tilde{F}\left(\boldsymbol{\psi}, \boldsymbol{\theta}\right))}{\partial {\psi_1}} & \hdots & \frac{\partial \boldsymbol{\sigma} (\nabla_{\boldsymbol{\psi}} \Tilde{F}\left(\boldsymbol{\psi}, \boldsymbol{\theta}\right))}{\partial {\psi_n}} \end{bmatrix},\\
        &= I - \begin{bmatrix} \frac{\partial \sigma_1 (\nabla_{\boldsymbol{\psi}} \Tilde{F}\left(\boldsymbol{\psi}, \boldsymbol{\theta}\right))}{\partial {\psi_1}} & \hdots & \frac{\partial \sigma_1 (\nabla_{\boldsymbol{\psi}} \Tilde{F}\left(\boldsymbol{\psi}, \boldsymbol{\theta}\right))}{\partial {\psi_n}} \\
        \vdots & \ddots & \vdots \\
        \frac{\partial \sigma_n (\nabla_{\boldsymbol{\psi}} \Tilde{F}\left(\boldsymbol{\psi}, \boldsymbol{\theta}\right))}{\partial {\psi_1}} & \hdots & \frac{\partial \sigma_n (\nabla_{\boldsymbol{\psi}} \Tilde{F}\left(\boldsymbol{\psi}, \boldsymbol{\theta}\right))}{\partial {\psi_n}} \end{bmatrix} = I - \begin{bmatrix} \frac{\partial \sigma \left(\frac{\partial \Tilde{F}\left(\boldsymbol{\psi}, \boldsymbol{\theta}\right)}{\partial {\psi_1}}\right)}{\partial {\psi_1}} & \hdots & \frac{\partial \sigma \left(\frac{\partial \Tilde{F}\left(\boldsymbol{\psi}, \boldsymbol{\theta}\right)}{\partial {\psi_1}}\right)}{\partial {\psi_n}} \\
        \vdots & \ddots & \vdots \\
        \frac{\partial \sigma \left(\frac{\partial \Tilde{F}\left(\boldsymbol{\psi}, \boldsymbol{\theta}\right)}{\partial {\psi_n}}\right)}{\partial {\psi_1}} & \hdots & \frac{\partial \sigma \left(\frac{\partial \Tilde{F}\left(\boldsymbol{\psi}, \boldsymbol{\theta}\right)}{\partial {\psi_n}}\right)}{\partial {\psi_n}} \end{bmatrix},\\
        &= I - \begin{bmatrix} \sigma' \left(\frac{\partial \Tilde{F}\left(\boldsymbol{\psi}, \boldsymbol{\theta}\right)}{\partial {\psi_1}}\right) \frac{\partial \Tilde{F}\left(\boldsymbol{\psi}, \boldsymbol{\theta}\right)}{\partial^2 {\psi_1}} & \hdots & \sigma' \left(\frac{\partial \Tilde{F}\left(\boldsymbol{\psi}, \boldsymbol{\theta}\right)}{\partial {\psi_1}}\right)\frac{\partial \Tilde{F}\left(\boldsymbol{\psi}, \boldsymbol{\theta}\right)}{\partial {\psi_1} \partial{\psi_n}} \\
        \vdots & \ddots & \vdots \\
        \sigma' \left(\frac{\partial \Tilde{F}\left(\boldsymbol{\psi}, \boldsymbol{\theta}\right)}{\partial {\psi_n}}\right) \frac{\partial \Tilde{F}\left(\boldsymbol{\psi}, \boldsymbol{\theta}\right)}{\partial {\psi_n} \partial{\psi_1}} & \hdots & \sigma' \left(\frac{\partial \Tilde{F}\left(\boldsymbol{\psi}, \boldsymbol{\theta}\right)}{\partial {\psi_n}}\right) \frac{\partial \Tilde{F}\left(\boldsymbol{\psi}, \boldsymbol{\theta}\right)}{\partial^2 {\psi_n}} \end{bmatrix},\\
        &= I - \begin{bmatrix} \sigma' \left(\frac{\partial \Tilde{F}\left(\boldsymbol{\psi}, \boldsymbol{\theta}\right)}{\partial {\psi_1}}\right) & \hdots & 0 \\
        \vdots & \ddots & \vdots \\
        0  & \hdots & \sigma' \left(\frac{\partial \Tilde{F}\left(\boldsymbol{\psi}, \boldsymbol{\theta}\right)}{\partial {\psi_n}}\right) \end{bmatrix} \begin{bmatrix} \frac{\partial \Tilde{F}\left(\boldsymbol{\psi}, \boldsymbol{\theta}\right)}{\partial^2 {\psi_1}} & \hdots & \frac{\partial \Tilde{F}\left(\boldsymbol{\psi}, \boldsymbol{\theta}\right)}{\partial {\psi_1} \partial{\psi_n}} \\
        \vdots & \ddots & \vdots \\
        \frac{\partial \Tilde{F}\left(\boldsymbol{\psi}, \boldsymbol{\theta}\right)}{\partial {\psi_n} \partial {\psi_1}} & \hdots & \frac{\partial \Tilde{F}\left(\boldsymbol{\psi}, \boldsymbol{\theta}\right)}{\partial^2 {\psi_n}} \end{bmatrix},\\
        &= I - \Sigma' (\nabla_{\boldsymbol{\psi}} \Tilde{F}\left(\boldsymbol{\psi}, \boldsymbol{\theta}\right)) \cdot  \nabla_{\boldsymbol{\psi}}^2 \Tilde{F}\left(\boldsymbol{\psi}, \boldsymbol{\theta}\right),
    \end{split}
\end{equation}
where the function $\Sigma': \mathbb{R}^{n} \rightarrow \mathbb{R}^{n  \times n}$ is defined as $$\Sigma' (\mathbf{x}) = \begin{bmatrix} \sigma' (x_1) & \hdots & 0 \\ \vdots & \ddots & \vdots \\ 0 & \hdots & \sigma' (x_n) \end{bmatrix}$$ and $\sigma': \mathbb{R} \rightarrow \mathbb{R}$ is $$\sigma'(x) = (1 + \exp{(-x)})^{-2} \cdot \exp{(-x)}.$$ 

Similarly,
\begin{equation}
    \begin{split}
        B &= \partial_{\theta} \boldsymbol{\sigma}(\nabla_{\boldsymbol{\psi}} \Tilde{F} (\boldsymbol{\psi}, \boldsymbol{\theta})) = \begin{bmatrix} \frac{\partial \boldsymbol{\sigma} (\nabla_{\boldsymbol{\psi}} \Tilde{F}\left(\boldsymbol{\psi}, \boldsymbol{\theta}\right))}{\partial {\theta_1}} & \hdots & \frac{\partial \boldsymbol{\sigma} (\nabla_{\boldsymbol{\psi}} \Tilde{F}\left(\boldsymbol{\psi}, \boldsymbol{\theta}\right))}{\partial {\theta_d}} \end{bmatrix},\\
        &= \begin{bmatrix} \frac{\partial \sigma_1 (\nabla_{\boldsymbol{\psi}} \Tilde{F}\left(\boldsymbol{\psi}, \boldsymbol{\theta}\right))}{\partial {\theta_1}} & \hdots & \frac{\partial \sigma_1 (\nabla_{\boldsymbol{\psi}} \Tilde{F}\left(\boldsymbol{\psi}, \boldsymbol{\theta}\right))}{\partial {\theta_d}} \\
        \vdots & \ddots & \vdots \\
        \frac{\partial \sigma_n (\nabla_{\boldsymbol{\psi}} \Tilde{F}\left(\boldsymbol{\psi}, \boldsymbol{\theta}\right))}{\partial {\theta_1}} & \hdots & \frac{\partial \sigma_n (\nabla_{\boldsymbol{\psi}} \Tilde{F}\left(\boldsymbol{\psi}, \boldsymbol{\theta}\right))}{\partial {\theta_d}} \end{bmatrix} 
        = \begin{bmatrix} \frac{\partial \sigma \left(\frac{\partial \Tilde{F}\left(\boldsymbol{\psi}, \boldsymbol{\theta}\right)}{\partial {\psi_1}}\right)}{\partial {\theta_1}} & \hdots & \frac{\partial \sigma \left(\frac{\partial \Tilde{F}\left(\boldsymbol{\psi}, \boldsymbol{\theta}\right)}{\partial {\psi_1}}\right)}{\partial {\theta_n}} \\
        \vdots & \ddots & \vdots \\
        \frac{\partial \sigma \left(\frac{\partial \Tilde{F}\left(\boldsymbol{\psi}, \boldsymbol{\theta}\right)}{\partial {\psi_n}}\right)}{\partial {\theta_1}} & \hdots & \frac{\partial \sigma \left(\frac{\partial \Tilde{F}\left(\boldsymbol{\psi}, \boldsymbol{\theta}\right)}{\partial {\psi_n}}\right)}{\partial {\theta_n}} \end{bmatrix},\\
        &= \begin{bmatrix} \sigma' \left(\frac{\partial \Tilde{F}\left(\boldsymbol{\psi}, \boldsymbol{\theta}\right)}{\partial {\psi_1}}\right) \frac{\partial \Tilde{F}\left(\boldsymbol{\psi}, \boldsymbol{\theta}\right)}{\partial {\psi_1} \partial {\theta_1}} & \hdots & \sigma' \left(\frac{\partial \Tilde{F}\left(\boldsymbol{\psi}, \boldsymbol{\theta}\right)}{\partial {\psi_1}}\right)\frac{\partial \Tilde{F}\left(\boldsymbol{\psi}, \boldsymbol{\theta}\right)}{\partial {\psi_1} \partial {\theta_d}} \\
        \vdots & \ddots & \vdots \\
        \sigma' \left(\frac{\partial \Tilde{F}\left(\boldsymbol{\psi}, \boldsymbol{\theta}\right)}{\partial {\psi_n}}\right) \frac{\partial \Tilde{F}\left(\boldsymbol{\psi}, \boldsymbol{\theta}\right)}{\partial {\psi_n} \partial {\theta_1}} & \hdots & \sigma' \left(\frac{\partial \Tilde{F}\left(\boldsymbol{\psi}, \boldsymbol{\theta}\right)}{\partial {\psi_n}}\right) \frac{\partial \Tilde{F}\left(\boldsymbol{\psi}, \boldsymbol{\theta}\right)}{\partial {\psi_n} \partial \theta_d} \end{bmatrix},\\
        &= 
        \begin{bmatrix} \sigma' \left(\frac{\partial \Tilde{F}\left(\boldsymbol{\psi}, \boldsymbol{\theta}\right)}{\partial {\psi_1}}\right) & \hdots & 0 \\
        \vdots & \ddots & \vdots \\
        0  & \hdots & \sigma' \left(\frac{\partial \Tilde{F}\left(\boldsymbol{\psi}, \boldsymbol{\theta}\right)}{\partial {\psi_n}}\right) \end{bmatrix} \begin{bmatrix} \frac{\partial \Tilde{F}\left(\boldsymbol{\psi}, \boldsymbol{\theta}\right)}{\partial {\psi_1} \partial \theta_1} & \hdots & \frac{\partial \Tilde{F}\left(\boldsymbol{\psi}, \boldsymbol{\theta}\right)}{\partial {\psi_1} \partial{\theta_d}} \\
        \vdots & \ddots & \vdots \\
        \frac{\partial \Tilde{F}\left(\boldsymbol{\psi}, \boldsymbol{\theta}\right)}{\partial {\psi_n} \partial{\theta_d}} & \hdots & \frac{\partial \Tilde{F}\left(\boldsymbol{\psi}, \boldsymbol{\theta}\right)}{\partial{\psi_n} \partial \theta_d} \end{bmatrix},\\
        &= \Sigma'(\nabla_{\boldsymbol{\psi}} \Tilde{F} \left(\boldsymbol{\psi}, \boldsymbol{\theta}\right)) \cdot \partial_{\boldsymbol{\theta}} \nabla_{\boldsymbol{\psi}} \Tilde{F} \left(\boldsymbol{\psi}, \boldsymbol{\theta}\right).
     \end{split}
\end{equation}


\end{proof}

\added{\section{Hessian Inverse Computation During Implicit Differentiation}\label{app:hess_inv}
Note first that the Hessian in question can be computed efficiently through Cor.~\ref{cor:hessian}, using only function oracle calls: this is a well-known property of the multilinear relaxation.}

\added{Subsequently, we never have to invert matrices appearing in Eq.~\eqref{eq:fp_implicit} to compute the Jacobian $J$: we need to solve a linear system. Better yet, as highlighted by \citet{blondel2022efficient}, it is not necessary to form the entire Jacobian matrix: all of the readily implemented algorithms are so-called “matrix-free”. It is sufficient to left-multiply or right-multiply the Jacobian by $\partial_{\boldsymbol{\psi}} G$ and $\partial_{\boldsymbol{\theta}} G$ in our notation (see Eq.~\eqref{eq:fp_implicit}: these are respectively called vector-Jacobian product (VJP) and Jacobian-vector product (JVP), which can be computed efficiently \emph{without computing the full Jacobian}. JVPs in turn show up in forward passes while VJPs show up in backward passes of standard back-prop. In particular, left-multiplication (VJP) of $ v^\top = \nabla_{\boldsymbol{\psi}^*} \mathcal{L}(\boldsymbol{\psi}^*(\boldsymbol{\theta}))$ with $J$, can be computed by first solving $A^\top u = v$ for $u$, where $A$ as in Eq.~\eqref{eq:fp_implicit}. Then, $v^\top J$ can be obtained by $v^\top J = u^\top A J = u^\top B$.} 

\added{Finally, we solve these linear systems via the readily implemented conjugate gradient method \citep{hestenes1952methods} ($\mathtt{normalcg}$) or via GMRES \citep{saad1986gmres} ($\mathtt{gmres}$). We treat the choice of backward solver selection as a hyperparameter and optimize w.r.t. different combinations using cross-validation. They are both indirect solvers and iteratively solve the linear system up to a given precision.}

\section{Time Complexity of Finding the Root of the Fixed-Point in Equation~\eqref{eq:fixed_point}}\label{app:conv_rate}

In App.~\ref{app:convergence}, we base our proof on the fact that $\boldsymbol{\sigma} (\nabla_{\boldsymbol{\psi}} \Tilde{F}\left(\boldsymbol{\psi}, \boldsymbol{\theta}\right))$ is a contraction mapping when $\sup_{\boldsymbol{\psi} \in [\boldsymbol{0}, \boldsymbol{1}]} \|\partial_{\boldsymbol{\psi}} \boldsymbol{\sigma} (\nabla_{\boldsymbol{\psi}} \Tilde{F}\left(\boldsymbol{\psi}, \boldsymbol{\theta}\right))\|_F < 1.$ Substituting $q = \sup_{\boldsymbol{\psi} \in [\boldsymbol{0}, \boldsymbol{1}]} \|\partial_{\boldsymbol{\psi}} \boldsymbol{\sigma} (\nabla_{\boldsymbol{\psi}} \Tilde{F}\left(\boldsymbol{\psi}, \boldsymbol{\theta}\right))\|_F < 1$ back in Eq.~\eqref{eq:mmvt_result}, we obtain the Lipschitz constant for the fixed point as $$\|\boldsymbol{\sigma} (\nabla_{\boldsymbol{\psi}} \Tilde{F}\left(\boldsymbol{x}, \boldsymbol{\theta}\right)) - \boldsymbol{\sigma} (\nabla_{\boldsymbol{\psi}} \Tilde{F}\left(\boldsymbol{y}, \boldsymbol{\theta}\right))\|_2 \leq q \|\boldsymbol{x} - \boldsymbol{y}\|_2.$$

For the sequence defined in Eq.~\eqref{eq:iterative}, it holds that
\begin{equation*}
    \|\boldsymbol{\psi}^\ast - \boldsymbol{\psi}^{(K)}\|_2 \leq \frac{q^K}{1-q} \|\boldsymbol{\psi}^{(1)} - \boldsymbol{\psi}^{(0)}\|_2.
\end{equation*}

We wish to stop the iterations when $\frac{q^K}{1-q} \|\boldsymbol{\psi}^{(1)} - \boldsymbol{\psi}^{(0)}\|_2 \leq \epsilon$ and we know that $\boldsymbol{\psi}$ in $[0, 1]^{|V|}$, this means we need to run the iterative sequence given in Eq.~\eqref{eq:iterative} for $K \leq \frac{\log{(\epsilon (1-q)/\sqrt{|V|})}}{\log{q}}$ iterations.

\section{Additional Experiment Details and Results}\label{app:exp_details}
\subsection{Datasets}

\paragraph{Moons and Gaussian.} There are two classes in the synthetic datasets, whose labels are Bernoulli sampled with $p = 0.5$. Based on this label, the optimal subset and ground set pairs are constructed as follows: 1) sampling $10$ points within the class as $S^*$; and 2) sampling $90$ points from the other class as $V \backslash S^*$. This process is repeated until $|\mathcal{D}_{\text{validation + training}}| = 2000$ and $|\mathcal{D}_{\text{test}}| = 1000$.
Following the experimental procedure of 
\citet{ou2022learning}, we use the \textsc{scikit-learn} package~\citep{scikit-learn} to generate the Moons dataset, consisting of two interleaving moons with some noise with variance $\sigma^2=0.1$. For the Gaussian dataset, we sample data from a mixture of Gaussians $\frac{1}{2}\mathcal{N}(\mathbf{\mu}_0, \mathbf{\Sigma})+\frac{1}{2}\mathcal{N}(\mathbf{\mu}_1, \mathbf{\Sigma})$, where $\mathbf{\mu}_0=\left[ \frac{1}{\sqrt{2}}, \frac{1}{\sqrt{2}}\right]$, $\mu_1=-\mu_0$, and $\mathbf{\Sigma}=\frac{1}{4}\mathbf{I}$.

\paragraph{CelebA.}
The CelebFaces Attributes dataset (CelebA)~\citep{liu2015faceattributes} is a large-scale face dataset used extensively in computer vision research, particularly for tasks such as face detection, face attribute recognition, etc. The dataset contains $202,599$ face images of $10,177$ celebrities with various poses and backgrounds. Besides, each image is annotated with $40$ binary attributes, describing facial features and properties (e.g., having a mustache, wearing a hat or glasses, etc.). Following previous work, we select $2$ attributes at random and construct the set $V$ with a size of $8$ and the oracle set $S^*$ with a size of $2$ or $3$, where neither attribute is present (e.g. not wearing glasses and hats). 

\paragraph{Amazon.}
The Amazon Baby Registry dataset~\citep{gillenwater2014amazon} includes various subsets of baby registry products chosen by customers. These products are then organized into $18$ distinct categories. From these, $12$ categories are selected. Each product in the dataset is described by a textual description, which has been transformed into a $768$-dimensional vector using a pre-trained BERT model~\citep{Devlin2019BERTPO}. For each category, the $(V, S^*)$ pairs are sampled using the following process. First, we exclude subsets chosen by customers that contain only one item or more than $30$ items. Next, we divide the remaining subsets into training, validation, and test sets equally. For each oracle subset $S^* \in S$, we randomly sample $30 - |S^*|$ additional products from the same category to ensure that $V$ contains exactly $30$ products. This method constructs a data point $(V, S^*)$ for each customer, simulating a real-world scenario where $V$ represents $30$ products shown to the customer, and $S^*$ represents the subset of products the customer is interested in.

\paragraph{BindingDB.}
BindingDB is a dataset designed to facilitate target selection in drug discovery. Given a set of compounds, the goal is to select the target to which a drug can bind to treat the disease effectively \cite{BindingDB}. An ideal target should possess certain properties such as high biological activity, diversity, absorption, distribution, metabolism, excretion, and toxicology (ADME/Tox) \cite{bhhatarai2019opportunities, singh2024advances}. Thus, one needs to go through multiple filters sequentially, requiring intermediate signals that can be very expensive or impossible due to the privacy policy \cite{ou2022learning}. However, by approaching the target selection from a set function perspective, we only require the compound sets and optimal target pairs. Here, we follow the steps in \citep{ou2022learning}: a) choose two filters: high bioactivity and diversity. b) construct ground set with size $|V|=300$. c) filter out one-third of the compounds with the highest bioactivity via a distance matrix based on the fingerprint similarity of molecules. d) ensure diversity by generating OS oracle $S^*$ according to the centers of clusters which are presented by applying the affinity propagation algorithm. e) split the datasets so that the training, validation, and test set are 1,000, 100, and 100.

\subsection{Hyperparameters}
Both baseline and the proposed models are trained with an Adam optimizer~\citep{kingma2014adam} with a learning rate $\eta \in \{10^{-4}, 10^{-3}, 10^{-2}, 10^{-1}\}$ 
and a batch size of $4$ for BindingDB and $128$ for the remaining datasets. We explore the number of layers, denoted as $L$, as a hyperparameter $L\in \{2, 3\}$.\footnote{The BindingDB dataset requires larger memory for training and, thus, is only tested for 2 layers 
due to computational limits.} For our proposed algorithms, we experiment with different root finding methods, i.e., forward solvers $\in \{\mathtt{fpi}, \mathtt{anderson}\}$ where $\mathtt{fpi}$ corresponds to fixed-point iterations in Eq.~\eqref{eq:iterative} and $\mathtt{anderson}$ corresponds to Anderson acceleration~\citep{anderson1965iterative}. \added{For these root finding methods, we set the tolerance threshold to $10^{-6}$, a very small value, to ensure convergence of fixed-point iterations. This is feasible as convergence is fast.} We also try different methods to solve the linear system of equations (see Thm.~\ref{thm:implicit_result}) that show up during implicit differentiation, i.e., backward solvers in $\in \{\mathtt{normal cg}, \mathtt{gmres}\}$ where they stand for the conjugate gradient~\citep{hestenes1952methods} and GMRES~\citep{saad1986gmres} methods, respectively.   

\subsection{Normalizing the Gradient}\label{app:normalizing}
Scaling the gradient of the multilinear relaxation as discussed in Sec.~\ref{sec:algs}, modifies the fixed-point equation given in Eq.~\eqref{eq:fixed_point}. In particular, when we scale $\nabla_{\boldsymbol{\psi}} \Tilde{F} (\boldsymbol{\psi}, \boldsymbol{\theta})$; we, in effect, solve the following fixed-point equation:
\begin{equation} \label{eq:scaled_fixed_point}
    \begin{split}
        \boldsymbol{\psi} = \boldsymbol{\sigma}\left(\frac{2\nabla_{\boldsymbol{\psi}} \Tilde{F} (\boldsymbol{\psi}, \boldsymbol{\theta})}{|V|Q}\right).
    \end{split}
\end{equation}
We choose $Q=c$ to be a constant for $\mathtt{iDiffMF}_c$ and treat $c$ as a hyperparameter. We report the results in Tab.~\ref{tab:idiffmf-constant-performance}. For $\mathtt{iDiffMF}_2$ and $\mathtt{iDiffMF}_*$, we choose $Q=\|\nabla_{\boldsymbol{\psi}} \Tilde{F} (\boldsymbol{\psi}, \boldsymbol{\theta})\|_2$ and $Q=\|\nabla_{\boldsymbol{\psi}} \Tilde{F} (\boldsymbol{\psi}, \boldsymbol{\theta})\|_*$, respectively. We note that when computing the gradient through implicit differentiation, we take this change into account.

\subsection{Software and Hardware}
We conduct experiments on a DGX Station A100 equipped with four A100 (80GB). The operating system is Ubuntu 22.04.4 LTS with x86-64 architecture, powered by an AMD EPYC 7742 64-Core Processor(4TB of DDR4 memory and a 256MB L3 cache). All experiments are executed using Python 3.9, with PyTorch 2.2.1 and JAX 0.4.26 as the primary software packages. We provide an environment file in the supplement illustrating the version of all other packages.

\subsection{Inference Details} \label{app:inference}
After training, the learned objective $F_{\boldsymbol{\theta}}(\cdot)$ is used to produce an optimal subset $\hat{S}^\ast_i$ given query $V_i$ in the test set. In particular, during inference in the test time, $\boldsymbol{\psi}^{(0)}$ is set to $0.5 * \mathbf{1}$ and one step of the fixed-point iterations is applied to $\boldsymbol{\psi}^{(0)}$ with the learned $F_{\boldsymbol{\theta}}$ to obtain $\boldsymbol{\psi}^\ast$ except for the $\mathtt{DiffMF}$ algorithm. For the $\mathtt{DiffMF}$ algorithm, $K=5$ steps of fixed-point iterations is applied. This is in accordance with the implementation of \citet{ou2022learning}. We also experiment with letting fixed-point iterations converge during inference. We report these results in Tab.~\ref{tab:idiffmf-performance-converging}. Then, the corresponding prediction, $\hat{S_i^\ast}$, is found by applying topN rounding to $\boldsymbol{\psi}^\ast$, i.e., top $|S_i^\ast|$ elements are chosen to be in the prediction set $\hat{S_i^\ast}$.

The mean JC is obtained by computing the Jaccard  coefficient between these prediction sets and the ground truth sets $S_i^\ast$ in the test set. The Jaccard Coefficient of two sets is defined as the average of the size of their intersection divided by the size of their union. Mathematically, for two sets $A$ and $B$, the Jaccard Coefficient $J(A,B)$ is given by 
   $J(A, B) = \frac{|A \cap B|}{|A \cup B|}.$ 
Then, the mean JC 
overall optimal oracle subset pairs are
    $MJC(S_i^\ast,\hat{S_i^\ast}) = \frac{1}{N} \sum_{i=1}^N J(S_i^\ast, \hat{S_i^\ast}).$


\subsection{Architectures}\label{app:archs}
We use the same permutation invariant architecture for the set function $F_{\boldsymbol{\theta}}$ as in \citet{ou2022learning}. For completeness, we also include the architecture information here. All datasets share the same architecture except for their initial layer which encodes the set objects into vector representations. The summary of the set function architecture is given below.
\begin{table}[!t]
    \centering
    \begin{tabular}{c}
    \toprule
     \textbf{Set Function} ($F_{\boldsymbol{\theta}}$)  \\
     \hline
     InitLayer \\
     SumPooling \\
     FC($256$, $500$, ReLU) \\
     FC($500$, $500$, ReLU) \\
     FC($256$, $1$, $-$) \\
     \bottomrule
    \end{tabular}
    \caption{Summary of the architecture of $F_{\boldsymbol{\theta}}$. InitLayer is dataset specific. FC($d_{\text{ input}}$, $d_{\text{ output}}$, $f$) is a fully connected layer with an input dimension of $d_{\text{ input}}$, output dimension of $d_{\text{ output}}$, and activation function $f$.}
    \label{tab:architecture}
\end{table}

\begin{table}[!t]
    \centering
    \resizebox{\linewidth}{!}{%
    \begin{tabular}{c|c|c|c|cc}
    \toprule
     CelebA & Gaussian & Moons & Amazon & \multicolumn{2}{c}{BindingDB}  \\
     \hline
      & \multirow{8}{*}{FC($2$, $256$, $-$)} & \multirow{8}{*}{FC($2$, $256$, $-$)} & \multirow{8}{*}{FC($768$, $256$, $-$)} & \textbf{Drug} & \textbf{Target}\\
     Conv($32$, $3$, $2$, ReLU) & & & & Conv($32$, $4$, $1$, ReLU) & Conv($32$, $4$, $1$, ReLU)\\
     Conv($64$, $4$, $2$, ReLU) & & & & Conv($64$, $6$, $1$, ReLU) & Conv($64$, $8$, $1$, ReLU)\\
     Conv($128$, $5$, $2$, ReLU) & & & & Conv($96$, $8$, $1$, ReLU) & Conv($96$, $12$, $1$, ReLU)\\
     MaxPooling & & & & MaxPooling & MaxPooling \\
     FC($128$, $256$, $-$) & & & & FC($96$, $256$, ReLU) & FC($96$, $256$, ReLU)\\
     & & & & \multicolumn{2}{c}{Concatenation}\\
     & & & & \multicolumn{2}{c}{FC($512$, $256$, $-$)}\\
     \bottomrule
    \end{tabular}}
    \caption{Summary of the initial layer architectures for $F_{\boldsymbol{\theta}}$.}
    \label{tab:initlayers}
\end{table}


\subsection{Additional Results}\label{app:constant_exp}
We report the hyperparameter settings that perform best on the validation sets for each dataset and method for reproducibility purposes. Please see Tab.~\ref{tab:hparams} and Tab.~\ref{tab:idiffmf-constant-performance} for details. 
\begin{table*}[!t]
\centering
\resizebox{\linewidth}{!}{%
\begin{tabular}{|l|c|c|c|c|c|c|c|c|c|c|c|c|c|c|c|}
\cline{2-16}
\multicolumn{1}{c|}{} & \multirow{2}{*}{\textbf{Datasets}} & \multicolumn{2}{c|}{$\mathtt{EquiVSet}_{\textmd{ind}}$} & \multicolumn{2}{c|}{$\mathtt{EquiVSet}_{\textmd{copula}}$} & \multicolumn{2}{c|}{$\mathtt{DiffMF}$} & \multicolumn{4}{c|}{$\mathtt{iDiffMF}_2$}& \multicolumn{4}{c|}{$\mathtt{iDiffMF}_*$} \\
\cline{3-16}
 \multicolumn{1}{c|}{} & & $\eta$ & $L$ & $\eta$ & $L$ & $\eta$ & $L$ & $\eta$ & $L$ & \makecell{forward\\solver} & \makecell{backward\\solver} & $\eta$ & $L$ & \makecell{forward\\solver} & \makecell{backward\\solver} \\
\hline
\multirow{3}{*}{\rotatebox[origin=c]{90}{\makecell{\small AD}}} & CelebA & $0.001$ & $3$ & $0.001$ & $3$ & 0.001 & 3 & 0.01 & 3 & fpi & normal cg & $0.01$ & $3$ & fpi & normal cg\\
 & Gaussian & $0.0001$ & $3$ & $0.0001$ & $3$ & $0.0001$ & $3$ & $0.001$ & $2$ & fpi & normal cg & $0.0001$ & $2$ & fpi & gmres \\
 & Moons & $0.00001$ & $2$ & $0.00001$ & $2$ & $0.0001$& $3$ & $0.001$ & $3$ & anderson & gmres & $0.00001$ & $2$ & fpi & gmres \\
\hline
\multirow{12}{*}{\rotatebox[origin=c]{90}{\small PR (Amazon)}} & apparel & $0.001$ & $3$ & $0.0001$ & $3$ & $0.0001$ & $3$ & $0.0001$ & $3$ & anderson & normal cg & $0.0001$ & $3$ & fpi & normal cg \\
& bath & $0.0001$ & $3$ & $0.0001$ & $3$ & $0.0001$& $3$ & $0.0001$ & $2$ & anderson & normal cg & 0.00001& 3& fpi &normal cg \\
& bedding & $0.0001$ & $3$ & $0.0001$ & $3$ & $0.0001$ & $2$ & $0.0001$ & $2$ & anderson & normal cg & $0.0001$ &	$2$ & fpi & normal cg \\
& carseats & $0.001$ & $2$ & $0.0001$ & $2$ & $0.001$ & $3$ & $0.0001$ & $3$ & fpi & normal cg & $0.0001$ & $3$ & anderson & normal cg\\
& diaper & $0.001$ & $2$ & $0.0001$ & $2$ & $0.0001$ & $2$ & $0.0001$ & $3$ & fpi & normal cg & $0.0001$ & $3$ & fpi &	normal cg\\
& feeding & $0.0001$ & $3$ & $0.0001$ & $3$ & $0.0001$ & $3$ & $0.001$ & $3$ & anderson & normal cg & $0.001$ & $3$ & anderson & gmres\\
& furniture & $0.001$ &  $2$ & $0.001$ & $2$ & $0.001$ & $3$ & $0.0001$ & $2$ & fpi & gmres & $0.0001$ & $3$ & fpi & normal cg \\
& gear & $0.0001$ & $2$ & $0.0001$ & $2$ & $0.0001$ & $2$ & $0.0001$ & $3$ & fpi & normal cg & $0.0001$ & $3$ & fpi & normal cg \\
& health & $0.0001$  & $2$ & $0.0001$ & $3$ & $0.0001$ & $3$ & $0.0001$ & $3$ & anderson & normal cg &  $0.0001$ & $3$ & fpi & normal cg\\
& media & $0.0001$ & $3$ & $0.0001$ & $3$ & $0.001$ & $3$ & $0.0001$ & $3$ & fpi & gmres & $0.0001$ & $3$ & fpi & normal cg\\
& safety & $0.0001$ &  $3$ & $0.001$ &  $3$ & $0.0001$ &  $3$  & $0.0001$ & $3$ & fpi & gmres & $0.0001$ & $3$ & fpi & normal cg \\
& toys & $0.001$ & $3$ & $0.0001$ & $3$ & $0.0001$ & $3$  & $0.0001$ & $3$ & anderson & normal cg & $0.0001$ & $3$ & anderson & normal cg\\
\hline
\rotatebox[origin=c]{90}{\makecell{\small CS}} & BindingDB & $0.0001$ & $2$ & $0.0001$ & $2$ & $0.0001$ & $2$ & $0.0001$ & $2$ & fpi & normal cg & $0.0001$ & $2$ & fpi & normal cg\\
\hline
\end{tabular}
}
\caption{Best performing hyperparameters on average across all folds during cross-validation. We obtain the results on Tab.~\ref{tab:idiffmf-performance-others} based on the hyperparameter combinations we report on this table. We perform grid search on the Cartesian product of $\eta \in \{0.00001, 0.0001, 0.001, 0.01\}$ and $L \in \{2, 3\}$ for the $\mathtt{EquiVSet}_{\textmd{ind}}$, $\mathtt{EquiVSet}_{\textmd{copula}}$, and $\mathtt{DiffMF}$ algorithms, where we perform grid search on the combinations of $\eta \in \{0.00001, 0.0001, 0.001, 0.01\}$, $L \in \{2, 3\}$, forward solver $\in \{\mathtt{fpi}, \mathtt{anderson}\}$, and backward solver $\in \{\mathtt{normal cg}, \mathtt{gmres}\}$ for the $\mathtt{iDiffMF}_2$ and $\mathtt{iDiffMF}_*$ algorithms on all datasets except for BindingDB. Due to its size, we only explore the same range of learning rates for fixed $L$, forward and backward solver choices.
}
\label{tab:hparams}
\end{table*}

\begin{table}[!t]
\centering
\begin{tabular}{|l|c|c|c|c|c|c|c|c|}
\cline{2-9}
\multicolumn{1}{c|}{} & \multirow{2}{*}{\textbf{Datasets}} & \multicolumn{7}{c|}{$\mathtt{iDiffMF}_c$}\\
\cline{3-9}
 \multicolumn{1}{c|}{} & & \textbf{Test JC} & \textbf{Time} (s) & $\eta$ & $L$ & \makecell{forward\\solver} & \makecell{backward\\solver} & $c$\\
\hline
\multirow{3}{*}{\rotatebox[origin=c]{90}{\makecell{\small AD}}} & CelebA & $53.93 \pm 0.63$ & $2211.87 \pm 140.61$ & $0.001$ & $2$ & anderson & normal cg & $10000$\\
 & Gaussian & $90.94 \pm 0.08$ & $60.61 \pm 10.19$ & $0.01$ & $2$ & anderson & normal cg & $1000$\\
 & Moons & $58.37 \pm 0.32$ & $87.62	\pm 20.65$ & $0.001$ & $2$ & anderson & normal cg & $100$\\
\hline
\multirow{12}{*}{\rotatebox[origin=c]{90}{\small PR (Amazon)}} & apparel & $48.08 \pm 0.95$ & $77.97\pm 4.07$ & $0.001$ & $2$ & fpi & normal cg & $100$\\
& bath & $50.17	\pm 0.97$ & $42.46 \pm 2.11$ & $0.00001$ & $2$ & anderson & gmres & $100$\\
& bedding & $47.93 \pm 1.29$ & $132.59 \pm 14.31$ & $0.001$ & $2$ & anderson & normal cg & $10000$\\
& carseats & $20.60 \pm 1.39$ & $38.00 \pm 11.92$ & $0.01$ & $2$ & fpi & normal cg & $1000$\\
& diaper & $57.13 \pm 5.37$ & $100.37 \pm 31.09$ & $0.0001$ & $2$ & anderson & gmres & $100$\\
& feeding & $49.15 \pm 7.20$ & $124.59 \pm 45.12$ & $0.0001$ & $2$ & anderson & gmres & $100$\\
& furniture & $17.65 \pm 0.70$ & $30.77 \pm 3.35$ & $0.01$ & $2$ & fpi & gmres & $100$ \\
& gear & $42.39 \pm 1.00$ & $52.27 \pm 1.15$ & $0.001$ & $2$ & anderson & normal cg & $100$\\
& health & $39.82 \pm 0.34$ & $57.10 \pm 2.88$ & $0.001$ & $2$ & fpi & normal cg & $1000$\\
& media & $41.32 \pm 1.35$ & $46.02 \pm 4.25$ & $0.01$ & $2$ & anderson & normal cg & $100$\\
& safety & $20.50 \pm 2.00$ & $36.76 \pm 6.84$ & $0.01$ & $2$ & fpi & gmres & $100$\\
& toys & $40.87 \pm 2.96$ & $40.00 \pm 3.23$ & $0.01$ & $2$ & fpi & gmres & $100$\\
\hline
\rotatebox[origin=c]{90}{\makecell{\small CS}} & BindingDB & $74.72 \pm 1.73$ & $11121.13 \pm 2553.24$ & $0.001$ & $2$ & anderson & normal cg & $100$\\
\hline
\end{tabular}
\caption{
Test Jaccard coefficient and training time for set anomaly detection (AD), product recommendation (PR) and compound selection (CS) tasks for the $\mathtt{iDiffMF}_c$ algorithm, i.e., the version of $\mathtt{iDiffMF}$ after scaling the objective with a constant $c$. We construct this algorithm by multiplying $\nabla_{\boldsymbol{\psi}} \Tilde{F} (\boldsymbol{\psi}, \boldsymbol{\theta})$ with ${2}/{(|V| c)}.$ 
}
\label{tab:idiffmf-constant-performance}
\vspace{-9pt}
\end{table}

\begin{table*}[!t]
\centering
\resizebox{\linewidth}{!}{%
\begin{tabular}{|l|c|c|c|c|c|c|c|c|c|c|c|}
\cline{2-12}
\multicolumn{1}{c|}{} & \multirow{2}{*}{\textbf{Datasets}} & \multicolumn{2}{c|}{$\mathtt{EquiVSet}_{\textmd{ind}}$} & \multicolumn{2}{c|}{$\mathtt{EquiVSet}_{\textmd{copula}}$} & \multicolumn{2}{c|}{$\mathtt{DiffMF}$} & \multicolumn{2}{c|}{$\mathtt{iDiffMF}_2$}& \multicolumn{2}{c|}{$\mathtt{iDiffMF}_*$} \\
\cline{3-12}
 \multicolumn{1}{c|}{} & & \textbf{Test JC} & \textbf{Time} (s) & \textbf{Test JC} & \textbf{Time} (s) & \textbf{Test JC} & \textbf{Time} (s) & \textbf{Test JC} & \textbf{Time} (s) & \textbf{Test JC} & \textbf{Time} (s) \\
\hline
\multirow{3}{*}{\rotatebox[origin=c]{90}{\makecell{\small AD}}} & CelebA & $55.02\pm0.20$ & $1151.17\pm698.13$ & $56.16\pm0.81$ & $1195.47\pm731.84$ & $54.42\pm0.70$ & $ 1299.13\pm984.20$ &  $55.18\pm0.77$ & $1955.56\pm260.31$ & $56.33\pm0.73$ & $1670.36\pm259.22$\\
 & Gaussian & $90.55\pm0.06$ & $30.68\pm3.86$ & $90.94\pm0.09$ & $39.11\pm6.09$ & $90.96\pm0.05$ & $85.75\pm35.82$ &  $\mathbf{91.0\pm0.02}$ & $46.0\pm5.23$ & $\underline{90.95\pm0.06}$ & $55.76\pm14.16$\\
 & Moons & $57.76\pm0.11$ & $66.99\pm 4.43$ & $\underline{58.67\pm0.18}$ & $ 62.03\pm6.82$ & $58.45\pm0.15$& $58.24\pm3.01$ &  $58.45\pm0.32$ & $77.56\pm16.27$ & $\mathbf{58.95\pm0.1}$ & $57.33\pm10.38$\\
\hline
\multirow{12}{*}{\rotatebox[origin=c]{90}{\small PR (Amazon)}} & apparel & $68.45\pm0.96$ & $38.32\pm6.63$ & $\mathbf{78.19\pm0.89}$ & $77.14\pm14.37$ & $70.60\pm1.35$ & $63.06\pm16.12$ &  $\underline{76.06\pm4.56}$ & $121.7\pm52.21$ & $73.73\pm5.91$ & $139.77\pm58.75$ \\

& bath & $67.51\pm1.19$ & $34.01\pm5.89$ & $\mathbf{77.72\pm1.98}$ & $53.29\pm6.68$ & $71.87\pm0.27$ & $61.84\pm12.73$ &  $\underline{77.49\pm1.19}$ & $79.65\pm11.73$ & $75.51\pm0.45$ & $151.23\pm22.02$ \\

& bedding & $66.20\pm1.10$ & $40.99\pm3.59$ & $\mathbf{77.26\pm1.24}$ & $67.13\pm12.78$ & $67.66\pm0.39$ & $72.69\pm7.73$ &  $\underline{77.07\pm2.02}$ & $118.84\pm40.73$ & $76.93\pm1.05$ & $93.63\pm18.62$ \\

& carseats & $19.99\pm1.01$ & $12.38\pm4.19$ & $20.03\pm0.15$ & $12.19\pm2.71$ & $20.15\pm0.65$ & $10.53\pm5.01$ & $\underline{22.24\pm1.44}$ & $45.82\pm8.58$&  $\mathbf{22.42\pm1.04}$ & $54.50\pm12.11$  \\

& diaper & $74.26\pm0.73$ & $60.96\pm17.79$ & $\mathbf{83.66\pm0.69}$ & $193.55 \pm 80.28$ & $81.74\pm1.18$ & $95.22\pm10.54$ &  $\underline{82.75\pm0.63}$ & $154.05\pm27.68$ & $81.65\pm0.74$ & $184.36\pm57.70$\\

& feeding & $71.46\pm0.43$ & $68.43\pm26.08$ & $\mathbf{82.47\pm0.19}$ & $95.18\pm21.75$ & $77.44\pm0.46$ & $93.27\pm18.81$ &  $\underline{81.54\pm1.80}$ & $165.30\pm45.35$ & $81.52\pm1.84$ & $245.12\pm55.21$ \\

& furniture & $17.28\pm0.88$ &  $10.98\pm2.44$ & $17.95\pm0.80$ & $10.03\pm3.23$ & $16.84\pm0.05$ & $9.31\pm1.79$ &  $\mathbf{20.29\pm2.50}$ & $42.42\pm6.82$ & $\underline{18.69\pm0.93}$ & $36.90\pm6.05$\\

& gear & $65.35\pm0.91$ & $40.89\pm3.19$ & $\mathbf{77.33\pm0.90}$ & $69.44 \pm 10.22$ & $66.06\pm2.86$ & $60.95\pm10.38$ & $\underline{73.91\pm10.30}$ & $112.19\pm56.94$ & $73.47\pm10.81$ & $127.13\pm56.82$ \\
& health & $63.04\pm0.41$  & $33.51\pm5.22$ & $72.03\pm0.77$ & $60.18\pm6.31$ & $59.64\pm0.81$ & $51.66\pm2.54$ &  $\underline{72.14\pm1.39}$ & $95.60\pm25.53$ & $\mathbf{72.31\pm1.04}$ & $116.89\pm32.01$\\
& media & $\mathbf{56.60\pm0.56}$ & $37.45\pm11.06$ & $55.73\pm1.18$ & $45.02\pm4.95$ & $51.32\pm1.11$ & $40.69\pm4.65$ & $\underline{56.41\pm2.63}$ & $73.1\pm21.26$ & $55.20\pm1.62$ & $67.3\pm6.66$\\
& safety & $21.99\pm1.85$ &  $10.39\pm1.87$ & $22.09\pm3.30$ &  $13.14\pm3.13$ & $24.66\pm5.56$ &  $8.59\pm1.31$  &  $\mathbf{25.98\pm1.73}$ & $51.08\pm7.64$ & $\underline{25.73\pm2.33}$ & $47.96\pm6.39$\\
& toys & $62.36\pm1.31$ & $34.06\pm6.69$ & $\mathbf{69.08\pm1.04}$ & $47.81\pm9.46$ & $64.39\pm1.64$ & $43.96\pm6.89$  & $68.55\pm1.13$ & $79.46\pm23.40$ & \underline{$68.70\pm1.01$} & $109.88\pm34.23$ \\
\hline
\rotatebox[origin=c]{90}{\makecell{\small CS}} & BindingDB & $73.59\pm0.75$ & $9934.30 \pm 2591.36$ & $73.57 \pm 2.05$ & $13983.93 \pm 4458.52$ & $73.22\pm 1.08$ & $21472.44 \pm 3239.73$ & \underline{$76.97\pm0.74$} & $13004.75\pm5611.39$ & $\mathbf{77.42\pm0.64}$ & $12097.14\pm1985.58$\\
\hline
\end{tabular}
}
\caption{
Test Jaccard Coefficient (JC) and training time for set anomaly detection (AD), product recommendation (PR), and compound selection (CS) tasks, across all five algorithms. $\mathtt{iDiffMF}_2$ and $\mathtt{iDiffMF}_*$ correspond to our algorithm with Frobenius and nuclear norm scaling. This table differs from Tab.~\ref{tab:idiffmf-performance-others} in its $\mathtt{iDiffMF}$ columns. In this table, fixed-point iterations are run until convergence during inference. \textbf{Bold} and \underline{underline} indicate the best and second-best performance results, respectively. The confidence intervals on the table come from the standard variation of the measurements between folds during cross-validation.}
\label{tab:idiffmf-performance-converging}
\end{table*}
}
}
\end{document}